\documentclass[11pt]{article}
\usepackage{natbib}
\usepackage{fullpage}
\usepackage{multicol}
\usepackage{microtype}
\usepackage{graphicx}
\usepackage{authblk}
\usepackage{float}
\usepackage{booktabs} 
\usepackage{dsfont}
\usepackage{hyperref}
\usepackage{multirow}

\usepackage{amsmath}
\usepackage{amssymb}
\usepackage{mathtools}
\usepackage{amsthm}
\usepackage{enumitem}
\usepackage{subcaption}
\usepackage{thmtools}
\usepackage{makecell}
\usepackage{array}
\usepackage{pgfplots}
\pgfplotsset{compat=1.18}
\usetikzlibrary{arrows.meta,calc,intersections}

\usepackage[capitalize,noabbrev]{cleveref}

\theoremstyle{plain}
\newtheorem{theorem}{Theorem}[section]

\theoremstyle{definition}
\newtheorem{definition}[theorem]{Definition}
\newtheorem{assumption}[theorem]{Assumption}
\theoremstyle{remark}
\newtheorem{remark}[theorem]{Remark}

\usepackage[textsize=tiny]{todonotes}
\usepackage[table]{xcolor}
\definecolor{lightgray}{gray}{0.9}

\title{Soft-Radial Projection for Constrained End-to-End Learning}

\author{%
  {Philipp J.~Schneider \quad
  Daniel Kuhn}%
  \par\vspace{0.8em}
  Risk Analytics and Optimization Chair, EPFL%
  \par\vspace{0.5em}
  {\small{\texttt{\{philipp.schneider, daniel.kuhn\}@epfl.ch}}}
}

\begin{document}
\maketitle

\begin{abstract}
Integrating hard constraints into deep learning is essential for safety-critical systems. Yet existing constructive layers that project predictions onto constraint boundaries face a fundamental bottleneck: \emph{gradient saturation}. By collapsing exterior points onto lower-dimensional surfaces, standard orthogonal projections induce rank-deficient Jacobians, which nullify gradients orthogonal to active constraints and hinder optimization. We introduce \textit{Soft-Radial Projection}, a differentiable reparameterization layer that circumvents this issue through a radial mapping from Euclidean space into the \emph{interior} of the feasible set. This construction guarantees strict feasibility while preserving a full-rank Jacobian almost everywhere, thereby preventing the optimization stalls typical of boundary-based methods. We theoretically prove that the architecture retains the universal approximation property and empirically show improved convergence behavior and solution quality over state-of-the-art optimization- and projection-based baselines.
\end{abstract}

\section{Introduction}
\label{sec:intro}

Many decision-making systems operate under hard constraints during both training and deployment---for example, safety envelopes in autonomous driving, actuator limits in robotics \citep{brunke2022safe}, or budget and capacity constraints in operations. While neural networks are powerful function approximators, they are not inherently constraint-aware, and naive usage can lead to \emph{infeasible} outputs precisely in scenarios where constraint violations have serious consequences. We formalize constrained learning objectives and propose architectural mechanisms that enforce feasibility \emph{by construction}.

\textbf{Setup.} Let $\mathcal{Z}\subseteq\mathbb{R}^d$ be the input space, $\mathcal{Y}\subseteq\mathbb{R}^n$ the target space, and $\mathcal{C}\subseteq\mathbb{R}^n$ the feasible set of decisions, assumed closed and convex with a nonempty interior. This assumption is without loss of generality, as we can always restrict the ambient space to the affine hull of the convex set. Data are drawn from a distribution $\mathbb{P}$ on $\mathcal{Z}\times\mathcal{Y}$ with marginal $\mathbb{P}_Z$ on $\mathcal{Z}$. We define the policy space $\Pi$ as the set of all measurable mappings $\pi: \mathcal{Z} \to \mathbb{R}^n$ that map an input $z$ to a decision.

\textbf{Supervised learning.} Let $(z,y)\sim\mathbb{P}$. We learn an optimal policy by solving
\begin{equation}
    \label{eq:supervised}
    \begin{array}{cl}
        \displaystyle \min_{\pi\in\Pi} & \displaystyle \mathbb{E}_{(z,y)\sim\mathbb{P}}\big[\ell\big(\pi(z),y\big)\big] \\
        \text{s.t.} & \pi(z)\in\mathcal{C} \quad \text{for $\mathbb{P}_Z$-a.e.\ } z,
    \end{array}
\end{equation}

where $\ell:\mathcal{C}\times\mathcal{Y}\to\mathbb{R}$ is a chosen loss. This formulation covers standard regression tasks as well as imitation learning, where the label $y$ represents a hindsight-optimal decision $\pi^\star(z)$. In such cases, $\ell$ quantifies the distance to the optimal decision or a regret surrogate.

\textbf{Task-driven learning.} Without labels, we measure performance directly via a task cost $c:\mathcal{Z}\times\mathcal{C}\to\mathbb{R}$ and optimize
\begin{equation}
    \label{eq:task}
    \begin{array}{cl}
        \displaystyle \min_{\pi\in\Pi} & \displaystyle \mathbb{E}_{z\sim\mathbb{P}_Z}\big[c\big(z, \pi(z)\big)\big] \\
        \text{s.t.} & \pi(z)\in\mathcal{C} \quad \text{for $\mathbb{P}_Z$-a.e.\ } z.
    \end{array}
\end{equation}
This formulation aligns the training objective with the downstream operational cost, thereby circumventing the potential objective mismatch inherent in surrogate supervised targets~\citep{donti2017task, elmachtoub2022smart, rychener2023end, chenreddy2024end}.

\textbf{Feasibility by construction.} Optimization over the space of constrained functions is generally intractable. However, we can simplify the problem by leveraging the \emph{Interchangeability Principle} \citep[Theorem 14.60]{rockafellar2009variational}. Applied to the task-driven formulation \eqref{eq:task}, this principle establishes that minimizing the expected cost is equivalent to minimizing the cost pointwise for almost every input $z$, formally $\inf_{\pi \in \Pi, \, \pi(z) \in \mathcal{C}} \mathbb{E}_{z\sim\mathbb{P}_Z}[ c(z, \pi(z)) ] = \mathbb{E}_{z\sim\mathbb{P}_Z}\left[ \inf_{x \in \mathcal{C}} c(z, x) \right]$. This equivalence allows us to solve the functional problem by finding a pointwise optimal decision for any given input $z$. We enforce constraints constructively by reparameterizing the decision policy: we optimize an unconstrained candidate function $g: \mathcal{Z} \to \mathbb{R}^n$ composed with a fixed projection operator, such that $\pi(z) = \text{Proj}(g(z))$.

The geometric properties of the projection are decisive for the optimization landscape. The standard approach is the \emph{orthogonal projection} of a vector $u \in \mathbb{R}^n$, denoted $P(u)=\arg\min_{v\in\mathcal{C}}\|u-v\|^2$. While $P$ guarantees feasibility, it maps the entire exterior space onto the lower-dimensional boundary $\partial\mathcal{C}$. In Figure \ref{fig:orthogonal-projection}, we demonstrate the phenomenon of gradient saturation. This occurs when the unconstrained output $u = g(z)$ falls outside $\mathcal{C}$ and the level sets of the objective are orthogonal to the boundary. Because $P$ collapses the exterior space onto the surface $\partial\mathcal{C}$, infinitesimal variations of $u$ orthogonal to the boundary result in zero change to the output $\pi(z)$. Consequently, gradient components in these directions are nullified, and the optimization dynamics stall or crawl along the boundary. While the projection is idempotent—preserving the landscape for points already in the interior—the optimization landscape for exterior points $u \notin \mathcal{C}$ becomes rank-deficient.

\begin{figure}[t]
    \centering
    \begin{subfigure}[b]{0.48\textwidth}
        \centering
        \includegraphics[width=\textwidth]{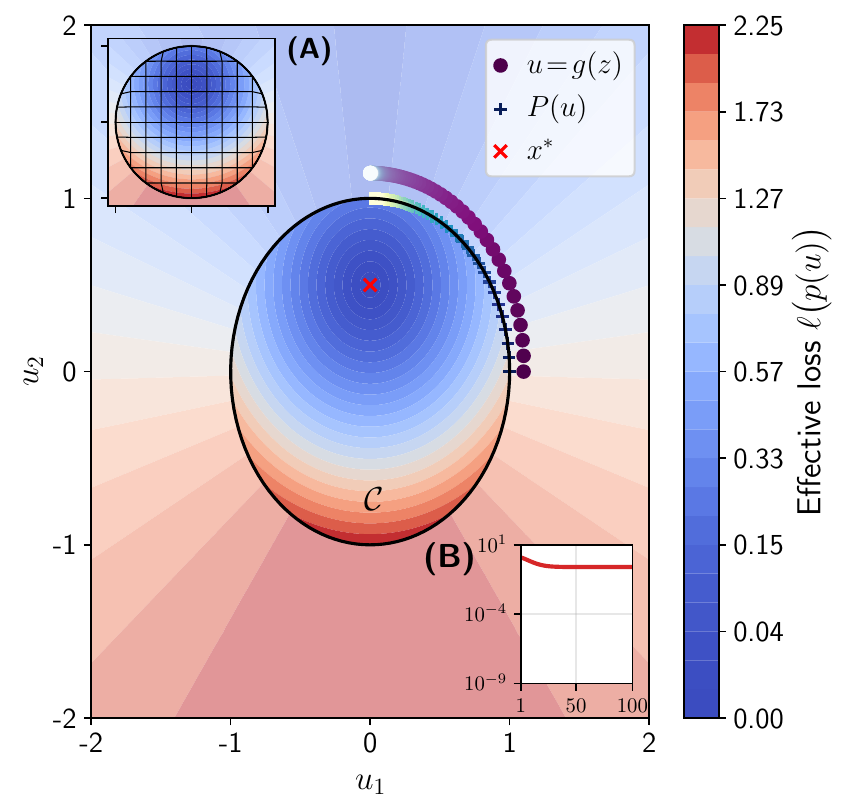}
        \caption{Orthogonal projection $P(u)$.}
        \label{fig:orthogonal-projection}
    \end{subfigure}
    \begin{subfigure}[b]{0.48\textwidth}
        \centering
        \includegraphics[width=\textwidth]{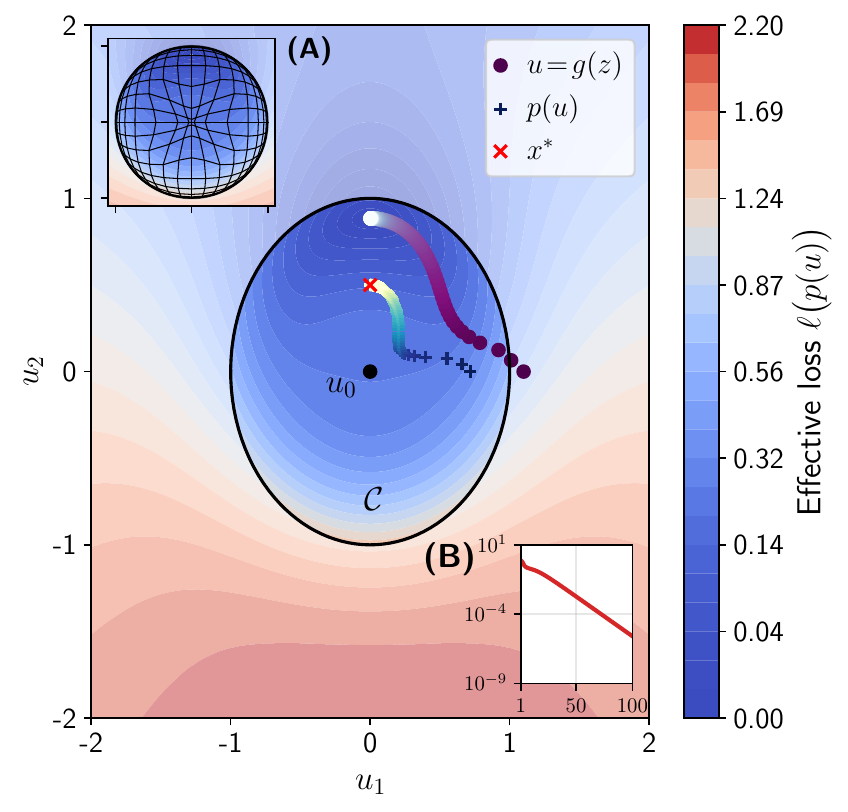}
        \caption{Soft-radial projection $p(u)$ into $\operatorname{Int}(\mathcal C)$.}
        \label{fig:soft-projection}
    \end{subfigure}
    \caption{Impact of projection geometry on optimization. Comparison of (a) Orthogonal and (b) Soft-Radial Projection on a 2D constrained task (target $\times$). Main plots visualize the trajectory of the \textit{unconstrained candidate} ($\circ$) versus the \textit{feasible decision} ($+$). Insets show the coordinate grid warping (A) and training loss (B). Note that soft-radial projection prevents gradient saturation by maintaining descent signals for infeasible inputs.}
    \label{fig:projection}
\end{figure}

\textbf{Soft-Radial Projection to the interior.} We instantiate the projection layer as a \emph{Soft-Radial Projection} $p:\mathbb R^n\to\operatorname{Int}(\mathcal C)$ constructed as a radial homeomorphism. Unlike the orthogonal projection, this map keeps every output strictly inside the interior $\operatorname{Int}(\mathcal C)$, acting as a smooth reparameterization of the feasible set rather than a distance-minimizing operator. By enforcing a diffeomorphic correspondence almost everywhere, we ensure well-conditioned gradients that guide the optimizer efficiently through the feasible region---and asymptotically toward the boundary if necessary---without the saturation artifacts of the orthogonal projection (\textit{cf.}~Figure \ref{fig:projection}).

\textbf{Related work.}
The challenge of obtaining feasibility guarantees for neural networks has attracted broad interest across research domains, ranging from safety-critical control \citep{garcia2015comprehensive} to decision-focused analytics \citep{kotary2021}. For an extensive review, we refer to Appendix~\ref{sec:related-work}. Existing methodologies can be broadly categorized into \textit{optimization-based layers} \citep[\textit{e.g.},][]{amos2017optnet, agrawal2019differentiable, dontidc3}, which operate via implicit differentiation through iterative solvers, and \textit{constructive layers} that enforce feasibility by design via specialized projections \citep[\textit{e.g.},][]{konstantinov2023, liang2024}. Crucially, the current research frontier has shifted beyond merely guaranteeing constraint satisfaction \citep{dalal2018safe} to ensuring smooth optimization landscapes that facilitate superior learning performance.

\textbf{Contributions.} Our main contributions are as follows:
\begin{itemize}[leftmargin=*, itemsep=0pt, topsep=0pt, partopsep=0pt, parsep=0pt]
    \item We introduce \textit{Soft-Radial Projection}, a closed-form layer for convex sets that ensures strict interior feasibility via a radial transformation (differentiable almost everywhere).
    \item We provide a theoretical analysis of the gradient dynamics, showing that our construction avoids the rank-deficient Jacobian issues inherent to orthogonal projections.
    \item We prove that the architecture preserves the \textit{Universal Approximation} property (Theorem~\ref{thm:ua}) for continuous functions mapping into $\operatorname{Int}(\mathcal{C})$.
    \item Empirical results demonstrate the effectiveness of our method in end-to-end learning tasks, including portfolio optimization and resource allocation for demand sharing.
\end{itemize}

We proceed by formally introducing the soft-radial projection mechanism.

\textbf{Notation.}
We use $\|u\|$ for the Euclidean norm of vectors and $\|A\|$ for the spectral norm of matrices. For functions $\psi: \mathcal{Z} \to \mathbb{R}^n$, we denote the uniform norm by $\|\psi\|_\infty \coloneqq \sup_{z \in \mathcal{Z}} \|\psi(z)\|$. We write $\operatorname{Int}(\mathcal{C})$ and $\partial\mathcal{C}$ for the interior and boundary of the feasible set. The Jacobian of a differentiable map $\phi$ is denoted by $J_\phi$. We reserve $u$ for unconstrained vectors (pre-projection) and use $x$ or $\pi(z)$ for feasible vectors in $\mathcal{C}$.

\section{Soft-Radial Projection Layer}
\label{sec:soft-projection}

Many models output a raw vector in $\mathbb R^n$ that must satisfy hard constraints at inference time---\textit{e.g.}, neural networks and decoders, control policies, differentiable optimization layers, or amortized solvers. We place a \emph{soft-radial projection layer} $p:\mathbb R^n\to\operatorname{Int}(\mathcal C)$ downstream of such modules so that, regardless of how the raw output is produced (even if infeasible), the final decision lies in the feasible set. 

Concretely, we instantiate the unconstrained candidate $g$ as a parameterized map $g_\theta:\mathcal Z\to\mathbb R^n$ (\textit{e.g.}, a neural network). Given the raw output $u=g_\theta(z)$, we set the policy as $\pi_\theta = p\circ g_\theta$, which ensures $\pi_\theta(z)\in\operatorname{Int}(\mathcal C)$ during training and deployment without penalty tuning or post-hoc repair. The learning objectives~\eqref{eq:supervised} and \eqref{eq:task} are then instantiated by composing with $p$ (\textit{e.g.}, $\ell(\pi_\theta(z),y)$ or $c(z,\pi_\theta(z))$). This composition turns any base predictor into a constraint-satisfying model while keeping gradients usable for end-to-end training. The remainder of this section develops the geometry and regularity of $p$ needed for gradient-based methods. We first formalize the soft-radial projection $p$ and its raywise components. The construction relies on a smooth contraction toward an anchor point, mapping the entire ambient space into the interior of $\mathcal C$ while retaining differentiability almost everywhere.

\begin{definition}[Soft-Radial Projection]
\label{def:soft-projection}
Let $\mathcal C \subseteq \mathbb R^n$ be a closed, convex set with nonempty interior, and fix an anchor point $u_0 \in \operatorname{Int}(\mathcal C)$. Let $r:\mathbb R_{+}\to[0,1)$ be continuous and strictly increasing with $r(0)>0$ and $\lim_{\rho\to\infty} r(\rho)=1$.

For any $u\in\mathbb R^n$, the \emph{soft-radial projection} of $u$ onto $\mathcal C$ is
\[
p(u) \;=\; u_0 \;+\; r\!\big(\|u-u_0\|^2\big)\,\big(q(u)-u_0\big),
\]
where $q(u)$ denotes the \emph{hard radial projection} of $u$ onto $\mathcal C$,
\[
q(u) \;=\;
\begin{cases}
u, & \text{if } u\in \mathcal C, \\[4pt]
u_0 + \alpha^\star(u) (u-u_0), & \text{otherwise}.
\end{cases}
\]
The scaling factor $\alpha^\star(u)$ is defined as
\[
\alpha^\star(u) \;=\; \sup\{\alpha\in[0,1]:\; u_0 + \alpha(u-u_0)\in\mathcal C\}\in[0,1].
\]
\end{definition}

\begin{remark}
The mapping $q$ projects $u$ onto $\mathcal C$ along the ray emanating from the anchor $u_0$, in contrast to the usual Euclidean (orthogonal) projection. The soft-radial projection $p$ then interpolates between $u_0$ and $q(u)$, with interpolation factor governed by $r(\|u-u_0\|^2)$ (see Fig.~\ref{fig:geometric-intuition}). If we drop the strict-interior requirement and allow $r\equiv1$, then $p(u)=q(u)$ and the soft-radial projection takes the form of the hard radial projection.
\end{remark}

\begin{figure}[H]
\centering
\includegraphics[width=0.6\linewidth]{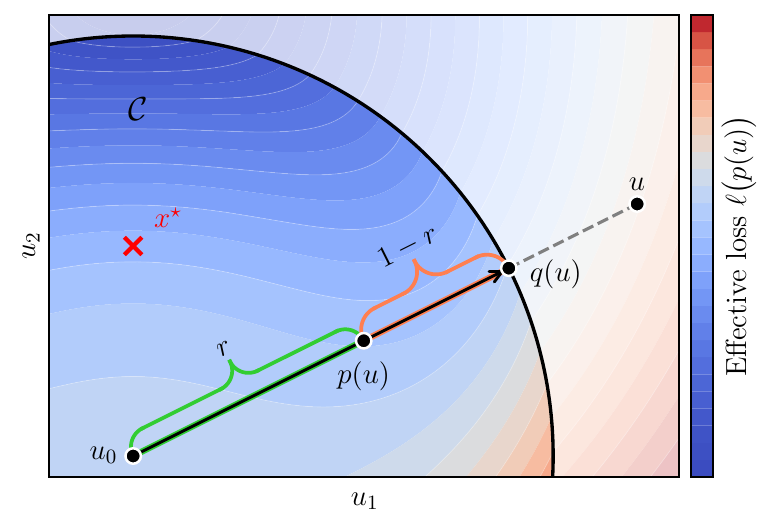}
\caption{Geometric intuition. A ray from anchor \(u_0\) through input \(u\) intersects the boundary \(\partial\mathcal C\) at the hard radial projection \(q(u)\). Our soft-radial mapping \(p(u)\) strictly enforces feasibility by smoothly interpolating along the segment \([u_0, q(u)]\) via the radial weight \(r\).}
\label{fig:geometric-intuition}
\end{figure}

\textbf{Coordinate convention.} The construction of $p$ is invariant under translations of both the input and the set: replacing $u$ by $u-u_0$ and $\mathcal C$ by $\mathcal C - \{u_0\}$ leaves all statements below unchanged. In the proofs, we therefore assume, without loss of generality, that the anchor is at the origin, $0\in\operatorname{Int}(\mathcal C)$, so that $q(u)=u$ for $u\in\mathcal C$, $q(u)=\alpha^\star(u)\,u$ otherwise, and the soft-radial projection simply reads $p(u)=r(\|u\|^2)\,q(u)$. For a general anchor $u_0$, one recovers the global form by applying these statements to $\mathcal C-\{u_0\}$ and $u-u_0$, and then translating back by $u_0$.

\textbf{Regularity of components.}
We first record basic properties of the ray map $q$ and the scalar $\alpha^\star$. These facts will be used later to build a raywise parametrization of $\operatorname{Int}(\mathcal C)$ and to analyze $\ell\circ p$ with first-order methods. In particular, continuity and local Lipschitz continuity of $q$ ensure that composing losses with $p$ preserves standard first-order optimization guarantees on compact sublevel sets.

\begin{restatable}{lemma}{RayRegularity}\label{lem:ray-regularity}
\textnormal{(Ray intersection and continuity).} Let $\mathcal C\subset\mathbb R^n$ be nonempty, closed, and convex with nonempty interior, and assume $0\in\operatorname{Int}(\mathcal C)$. For any $u\in\mathbb R^n$, the set 
\[
\{\alpha\in[0,1] : \alpha u\in\mathcal C\}
\]
is a nonempty closed interval $[0,\alpha^\star(u)]$ for a unique $\alpha^\star(u)\in[0,1]$, and $q(u)\in\mathcal C$. Moreover, $\alpha^\star$ is globally Lipschitz continuous and $q$ is locally Lipschitz (and therefore differentiable almost everywhere by Rademacher's theorem; see, e.g.,~\citet{federer1969geometric}, Thm.~3.1.6).
\end{restatable}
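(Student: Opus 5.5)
The plan is to express $\alpha^\star$ through the Minkowski gauge of $\mathcal C$, and then read off all the regularity claims from standard properties of the gauge. Since $0\in\operatorname{Int}(\mathcal C)$, I fix a radius $\varepsilon>0$ with $B(0,\varepsilon)\subseteq\mathcal C$. I then define
\[
\gamma(u)=\inf\{t>0:\ u\in t\,\mathcal C\}.
\]
The ball inclusion gives $\gamma(u)\le\|u\|/\varepsilon<\infty$.

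\textbf{Interval structure.} The set $S_u=\{\alpha\in[0,1]:\alpha u\in\mathcal C\}$ contains $0$, since $0\in\mathcal C$. It is an interval: if $\alpha u\in\mathcal C$ and $0\le\beta\le\alpha$, then $\beta u=(\beta/\alpha)\,\alpha u+(1-\beta/\alpha)\,0\in\mathcal C$ by convexity. It is closed, being the preimage of the closed set $\mathcal C$ under the continuous map $\alpha\mapsto\alpha u$, intersected with $[0,1]$. Hence $S_u=[0,\alpha^\star(u)]$, and the supremum in the definition is attained. It follows that $q(u)=\alpha^\star(u)u\in\mathcal C$. For $u\in\mathcal C$ we have $\alpha^\star(u)=1$, so this is consistent with the first case $q(u)=u$ of the definition.

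\textbf{Gauge formula.} Closedness of $\mathcal C$ gives $\{t>0: u\in t\mathcal C\}=[\gamma(u),\infty)$ whenever $\gamma(u)>0$. From this I would check that
\[
\alpha^\star(u)=\min\{1,\,1/\gamma(u)\},\qquad 1/0:=\infty.
\]
Indeed, $\alpha u\in\mathcal C$ holds iff $\alpha\gamma(u)\le 1$, by positive homogeneity of $\gamma$ and the characterization $\mathcal C=\{\gamma\le1\}$.

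\textbf{Global Lipschitz continuity of $\alpha^\star$.} Next I verify that $\gamma$ is sublinear: it is positively homogeneous, and subadditive by convexity. Sublinearity together with $\gamma(w)\le\|w\|/\varepsilon$ gives
\[
|\gamma(u)-\gamma(v)|\le\max\{\gamma(u-v),\gamma(v-u)\}\le\|u-v\|/\varepsilon,
\]
so $\gamma$ is $1/\varepsilon$-Lipschitz. The scalar map $h(t)=\min\{1,1/t\}$ on $[0,\infty)$ is $1$-Lipschitz: it is constant on $[0,1]$, and on $[1,\infty)$ its derivative has absolute value $1/t^2\le1$. Therefore $\alpha^\star=h\circ\gamma$ is globally Lipschitz with constant $1/\varepsilon$. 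This is the step I expect to be the main point. The naive route through $1/\gamma$ blows up near $\gamma=0$, and the truncation at $1$ is exactly what removes that singularity.

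\textbf{Local Lipschitz continuity of $q$ and differentiability.} Using $0\le\alpha^\star\le1$,
\[
\|q(u)-q(v)\|\le|\alpha^\star(u)-\alpha^\star(v)|\,\|u\|+\alpha^\star(v)\|u-v\|\le\bigl(\|u\|/\varepsilon+1\bigr)\|u-v\|.
\]
This shows $q$ is Lipschitz on every bounded set, that is, locally Lipschitz. Rademacher's theorem then yields differentiability of $q$ almost everywhere.
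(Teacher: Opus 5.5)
Your proof is correct and follows the paper's argument essentially step for step. Both write $\alpha^\star=\min\{1,1/\gamma_{\mathcal C}\}$, get Lipschitz continuity of the gauge from sublinearity and the inscribed ball, use the $1$-Lipschitz truncation $t\mapsto\min\{1,1/t\}$, and bound $q$ by the product estimate on bounded sets; you are slightly more explicit than the paper about the convexity step behind the interval structure and about the case $\gamma(u)=0$.
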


\textbf{Structural properties of $p$.} We now analyze the global geometry of $p$. A critical requirement is that $p$ must preserve the \emph{representational capacity} of the model $g$. Unlike orthogonal projections, which suffer from dimensional collapse by mapping the exterior space onto the boundary surface (a many-to-one mapping), our construction guarantees that $p$ induces a \emph{one-to-one parametrization} of $\operatorname{Int}(\mathcal C)$ by strictly rescaling radii along rays emanating from the origin.

\begin{assumption}[Radial monotonicity]\label{ass:radial}
The function $r:\mathbb R_+\to[0,1)$ is $C^1$, strictly increasing with $r'(\rho)>0$ for all $\rho>0$, and satisfies $r(0)>0$ and $\lim_{\rho\to\infty} r(\rho)=1$.
\end{assumption}

\noindent Assumption \ref{ass:radial} ensures that the soft-radial projection is strictly monotonic along rays, providing the topological foundation for the one-to-one mapping. This prevents distinct points on the same ray from collapsing to the same interior location. Specifically, the scalar map $t \mapsto r(t^2)t$ has derivative $
\frac{d}{dt}\bigl[ r(t^2)t \bigr] \;=\; r(t^2) + 2t^2 r'(t^2)$.
Since $r$ and $r'$ are positive, this derivative is strictly positive for all $t \ge 0$, confirming that $p$ is an injective transformation into $\operatorname{Int}(\mathcal{C})$.

The mapping $p$ admits a radial decomposition into angular and radial components, simplifying the analysis. Since the projection preserves the direction of the ray $\mathbb{R}_+ v$ (the angular component), its behavior is fully characterized by its action on the scalar distance from the origin—the radial component.

\begin{definition}[Boundary distance]
Fix a unit direction $v\in\mathbb R^n$ ($\|v\|=1$). We define the \emph{boundary distance} $\bar t(v) \in (0, \infty]$ as the distance from the origin to the boundary of $\mathcal C$ along $v$, given by $\bar t(v) \coloneqq \sup\{t \ge 0 : t v \in \mathcal C\}$.
\end{definition}

With the boundary established, the radial profile of the projection is defined by the function $\psi_v$. This map distinguishes between the interior regime, where the ray remains within $\mathcal C$, and the exterior regime, where the input is scaled relative to the boundary.

\begin{definition}[Scalar projection map]\label{def:scalar-map}
For a fixed unit direction $v$, define the map $\psi_v:[0,\infty)\to[0,\infty)$ by $\psi_v(t) \coloneqq r(t^2)\min\{t,\bar t(v)\}.$
\end{definition}

By construction, the soft-radial projection satisfies $p(tv) = \psi_v(t)v$. With this decomposition, we now establish the regularity of these components. The following lemma verifies that the boundary distance varies continuously with direction and that the scalar projection is strictly monotonic---properties that are essential for proving the soft-radial projection is a global bijection.

\begin{restatable}{lemma}{RegComponents}\label{lem:reg-comp}
\textnormal{(Regularity of components).} The boundary distance map $\bar t$ is continuous on the unit sphere $\{v:\|v\|=1\}$ (with values in $(0, \infty]$). Moreover, under Assumption~\ref{ass:radial}, for any unit direction $v$, the scalar map $\psi_v$ is continuous and strictly increasing, and its range is exactly $[0, \bar t(v))$.
\end{restatable}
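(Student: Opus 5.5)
The plan is to handle the two claims separately, treating $\bar t$ through the gauge (Minkowski) functional of $\mathcal C$ and $\psi_v$ by elementary one-variable arguments. For continuity of $\bar t$, introduce $\gamma(u) \coloneqq \inf\{\lambda>0 : u\in\lambda\mathcal C\}$. Since $0\in\operatorname{Int}(\mathcal C)$, there is a ball $B_\varepsilon(0)\subseteq\mathcal C$. Then $\gamma$ is finite, nonnegative, positively homogeneous and convex, and $\gamma(u)\le \|u\|/\varepsilon$. A finite convex function on $\mathbb R^n$ is continuous, and homogeneity together with the bound gives the explicit Lipschitz constant $1/\varepsilon$.

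For a unit vector $v$, closedness and convexity of $\mathcal C$ together with $0\in\mathcal C$ give $\{t\ge0: tv\in\mathcal C\}=[0,\bar t(v)]$ (or $[0,\infty)$). Homogeneity then yields $\bar t(v)=1/\gamma(v)$, with the convention $1/0=\infty$. Since $\gamma$ is continuous and nonnegative, the map $v\mapsto 1/\gamma(v)$ is continuous into $(0,\infty]$ when $(0,\infty]$ carries its natural (one-point-at-infinity) topology. This is exactly the continuity claimed. Positivity of $\bar t(v)\ge\varepsilon$ follows from the ball. This step is essentially the argument behind Lemma~\ref{lem:ray-regularity}, which already gives Lipschitz continuity of $\alpha^\star$, so I would cite that reasoning rather than redo it. I expect the main care point here to be the topology on $(0,\infty]$ for unbounded directions. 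I would state explicitly that the composition of the continuous map $\gamma$ with $s\mapsto 1/s$ from $[0,\infty)$ to $(0,\infty]$ is continuous.

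For $\psi_v$, continuity is immediate as a product and minimum of continuous functions, using continuity of $r$ from Assumption~\ref{ass:radial}. Strict monotonicity splits into two cases. On $[0,\bar t(v)]$ we have $\psi_v(t)=r(t^2)t$, whose derivative $r(t^2)+2t^2r'(t^2)>0$ was computed before the lemma. On $[\bar t(v),\infty)$ (when $\bar t(v)<\infty$) we have $\psi_v(t)=\bar t(v)\,r(t^2)$, which is strictly increasing because $r$ is and $\bar t(v)>0$. The two pieces agree at $\bar t(v)$, so $\psi_v$ is strictly increasing on all of $[0,\infty)$.

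For the range, note $\psi_v(0)=0$. We also have $\psi_v(t)\le r(t^2)\bar t(v)<\bar t(v)$ because $r<1$, and in the unbounded case $\psi_v(t)<\infty$ trivially. For the limit as $t\to\infty$, if $\bar t(v)<\infty$ then $\psi_v(t)=\bar t(v)r(t^2)\to\bar t(v)$ since $r\to1$. If $\bar t(v)=\infty$ then $\psi_v(t)=r(t^2)t\ge r(0)t\to\infty$. By continuity, strict monotonicity and the intermediate value theorem, the range is exactly $[0,\bar t(v))$.
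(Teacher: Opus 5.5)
Your proposal is correct and follows essentially the same route as the paper. The paper also writes $\bar t = 1/\gamma_{\mathcal C}$ via the gauge, composes with the continuous inversion $[0,\infty)\to(0,\infty]$, treats $\psi_v$ piecewise on $[0,\bar t(v)]$ and $[\bar t(v),\infty)$, and reads off the range from $\psi_v(0)=0$ and $\lim_{t\to\infty}\psi_v(t)=\bar t(v)$. Your version is somewhat more explicit: it handles the case $\bar t(v)=\infty$ separately, invokes $\bar t(v)>0$ for strictness on the exterior piece, and uses $r<1$ to show that $\bar t(v)$ is never attained.
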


The raywise analysis in Lemma~\ref{lem:reg-comp} provides the sufficient conditions to characterize the global topology of $p$. We now formally state the main result: $p$ is a homeomorphism, acting as a reversible deformation of the entire ambient space $\mathbb R^n$ onto the feasible interior $\operatorname{Int}(\mathcal C)$.

\begin{restatable}{theorem}{Homeomorphism}\label{thm:homeomorphism}
\textnormal{(Homeomorphism).} Under Assumption~\ref{ass:radial}, the soft-radial projection $p:\mathbb R^n\to\operatorname{Int}(\mathcal C)$ is a homeomorphism.
\end{restatable}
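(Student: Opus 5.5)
We work in the coordinate convention with the anchor at the origin, so that $p(tv)=\psi_v(t)\,v$ for every unit direction $v$ and $t\ge 0$, with $p(0)=0$. The plan has three steps: show that $p$ maps into $\operatorname{Int}(\mathcal C)$ bijectively, show continuity of $p$, and show continuity of $p^{-1}$. The whole argument is in polar coordinates, with Lemma~\ref{lem:reg-comp} doing the raywise work.

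\emph{Bijectivity.} First we characterize the interior raywise: $\operatorname{Int}(\mathcal C)=\{0\}\cup\{sv:\|v\|=1,\ 0\le s<\bar t(v)\}$. The inclusion "$\supseteq$" holds because $0\in\operatorname{Int}(\mathcal C)$. Indeed, if $s<\bar t(v)$, pick $s<t'\le\bar t(v)$ with $t'v\in\mathcal C$ (or $t'v$ replaced by the closed-set limit). Then $sv$ lies in the convex hull of $t'v$ and a small ball around $0$, and is therefore interior. The inclusion "$\subseteq$" holds because a point $\bar t(v)v$ with $\bar t(v)<\infty$ is a limit of points $tv\notin\mathcal C$ with $t\downarrow\bar t(v)$, so it lies on the boundary.

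By Lemma~\ref{lem:reg-comp}, each $\psi_v$ is a strictly increasing continuous bijection $[0,\infty)\to[0,\bar t(v))$. Since $p$ preserves directions, it follows that $p$ is injective (different directions give different rays, and $\psi_v(t)>0$ for $t>0$) and surjective onto $\operatorname{Int}(\mathcal C)$. The inverse is explicit: $p^{-1}(sv)=\psi_v^{-1}(s)\,v$.

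\emph{Continuity of $p$.} This is immediate from Lemma~\ref{lem:ray-regularity}: $q$ is locally Lipschitz, $u\mapsto r(\|u\|^2)$ is continuous, and $p=r(\|\cdot\|^2)\,q$.

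\emph{Continuity of $p^{-1}$.} This is the main obstacle, since an invariance-of-domain shortcut needs the target to be an open subset of $\mathbb R^n$. That condition does hold here, so the quickest route is Brouwer's invariance of domain: $p$ is a continuous injection $\mathbb R^n\to\mathbb R^n$, hence an open map, hence a homeomorphism onto its image $\operatorname{Int}(\mathcal C)$.

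If a self-contained argument is preferred, I would instead prove properness directly. Take $x_k=p(u_k)\to x\in\operatorname{Int}(\mathcal C)$. The key claim is that $(u_k)$ is bounded. Suppose instead that $\|u_k\|\to\infty$ along a subsequence, with directions $v_k\to v$. Then $\|x_k\|=r(\|u_k\|^2)\min\{\|u_k\|,\bar t(v_k)\}$. Continuity of $\bar t$ (Lemma~\ref{lem:reg-comp}) and $r\to 1$ force $\|x_k\|\to\bar t(v)$ (or $\infty$). This would place $x=\bar t(v)v$ on $\partial\mathcal C$, or make $x$ unbounded, contradicting $x\in\operatorname{Int}(\mathcal C)$.

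Once boundedness is established, every subsequence of $(u_k)$ has a further convergent subsequence, with limit $u$ say. Continuity and injectivity of $p$ give $p(u)=x$, so every such limit equals $u=p^{-1}(x)$. Hence $u_k\to p^{-1}(x)$, which is continuity of $p^{-1}$. The delicate point is the case $\bar t(v)=\infty$, together with the $\min$ switching between $\|u_k\|$ and $\bar t(v_k)$. Both cases are handled by the continuity of $\bar t$ in the extended sense, so that $\min\{\|u_k\|,\bar t(v_k)\}\to\bar t(v)$.
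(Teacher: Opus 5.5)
Your proof is correct and follows the paper's route: bijectivity raywise via the strictly increasing maps $\psi_v$ of Lemma~\ref{lem:reg-comp}, continuity of $p$ from that of $q$ and $r$, and continuity of $p^{-1}$ by Brouwer's invariance of domain. Your explicit raywise description of $\operatorname{Int}(\mathcal C)$ makes the argument more self-contained; the paper uses it only implicitly, both for surjectivity and for the fact that $p$ lands in the interior. Your alternative properness argument for continuity of $p^{-1}$, which avoids invariance of domain, is also sound.
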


This theorem confirms that the soft-radial layer preserves the topological structure of the unconstrained function space. Avoiding the dimensional collapse of hard projections, $p$ maintains a full-dimensional, bijective correspondence between the parameter space and the feasible policy space.

\textbf{Differential regularity.} Finally, to enable gradient-based learning, we require the projection to be differentiable and the resulting optimization landscape to be free of projection-induced vanishing gradients.

\begin{restatable}{theorem}{DiffInvert}\label{thm:differentiability}
\textnormal{(Differentiability and invertibility).} Under Assumption~\ref{ass:radial}, the soft-radial projection $p$ is differentiable almost everywhere on $\mathbb R^n$. The Jacobian $J_p(u)$ is invertible at every point $u$ where it exists.
\end{restatable}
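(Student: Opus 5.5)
The proof works in the coordinate convention with anchor $u_0=0$, so that $p(u)=r(\|u\|^2)\,q(u)$. The first claim is differentiability almost everywhere. By Lemma~\ref{lem:ray-regularity}, $q$ is locally Lipschitz. Since $r$ is $C^1$ by Assumption~\ref{ass:radial}, the map $u\mapsto r(\|u\|^2)$ is $C^1$. The product $p$ is therefore locally Lipschitz, and Rademacher's theorem gives differentiability almost everywhere. It remains to show that $J_p(u)$ is invertible wherever it exists. The plan is to split $\mathbb R^n$ into the interior $\operatorname{Int}(\mathcal C)$, the exterior $\mathbb R^n\setminus\mathcal C$, and the boundary $\partial\mathcal C$, and to compute $J_p$ directly in the first two regions.

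\emph{Interior.} On $\operatorname{Int}(\mathcal C)$ we have $q(u)=u$, so $p(u)=r(\|u\|^2)u$ is $C^1$ with
\[
J_p(u)=r(\|u\|^2)I+2r'(\|u\|^2)\,uu^\top .
\]
This matrix is symmetric. Its eigenvalues are $r>0$ on $u^\perp$ and $r+2\|u\|^2r'>0$ along $u$, so it is positive definite and hence invertible.

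\emph{Exterior.} For $u\notin\mathcal C$ we write $u=tv$ with $t=\|u\|$ and $\|v\|=1$. Then $q(u)=\bar t(v)\,v$, which is positively $0$-homogeneous, and $p(u)=r(t^2)\,\bar t(v)\,v$. Let $u$ be a point where $p$ is differentiable. If $\bar t$ (equivalently $q$) is differentiable there, Euler's relation for $0$-homogeneous maps gives $J_q(u)u=0$. We then obtain
\[
J_p(u)=2r'(\|u\|^2)\,q(u)u^\top + r(\|u\|^2)\,J_q(u).
\]
The range of $J_q(u)$ is contained in the tangent space of the boundary hypersurface at $q(u)$. Its kernel contains $u$, and on a complement of $u$ it is an isomorphism onto that tangent space, because the radial map from the sphere to $\partial\mathcal C$ is a bi-Lipschitz homeomorphism.

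To establish this non-degeneracy, I will use the gauge function $\gamma(u)=\|u\|/\bar t(u/\|u\|)$. This is the Minkowski functional of $\mathcal C$: it is convex, finite, positive away from $0$, and $1$-homogeneous, with $q(u)=u/\gamma(u)$. Hence
\[
J_q=\frac{I}{\gamma}-\frac{u\,\nabla\gamma^\top}{\gamma^2}.
\]
This matrix has kernel spanned by $u$ (since $\nabla\gamma^\top u=\gamma$ by Euler) and acts as $1/\gamma$ times the identity on $\ker\nabla\gamma^\top$. To show $J_p$ is injective, suppose $J_p w=0$ and decompose $w=au+h$ with $\nabla\gamma^\top h=0$. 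Then
\[
J_pw=2r'\,a\|u\|^2 q + \frac{r}{\gamma}h .
\]
The vector $q$ is parallel to $u$, and $u\notin\ker\nabla\gamma^\top$ because $\nabla\gamma^\top u=\gamma>0$. So the two terms lie in complementary subspaces and must vanish separately. This forces $a=0$ (using $r'>0$ and $u\neq 0$) and $h=0$. Hence $J_p$ is invertible on the exterior.

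\emph{Boundary.} The main obstacle is $\partial\mathcal C$. Points in $\partial\mathcal C$ can have positive measure only in degenerate ways, and in fact $\partial\mathcal C$ is Lebesgue-null for a convex set with nonempty interior. Even so, the statement claims invertibility wherever $J_p$ exists, so this case cannot be discarded as a null set. At $u\in\partial\mathcal C$, I plan to note that differentiability of $p$ forces the one-sided derivatives along $u$ to agree. Moreover, for tangential directions $h$, both one-sided expressions reduce to $r\,h$ plus a multiple of $u$. I would compute the Jacobian as the limit of interior difference quotients: along directions entering the interior, $J_p$ must coincide with the interior formula applied to those directions. The interior directions form an open cone, which spans $\mathbb R^n$, so $J_p(u)=r I+2r'uu^\top$, which is invertible. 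Verifying this spanning and cone argument cleanly is the step I expect to require the most care.
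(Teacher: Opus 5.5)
Your argument is correct. On $\operatorname{Int}(\mathcal C)$ and on $\mathbb R^n\setminus\mathcal C$ it is essentially the paper's argument. Both use the product-rule Jacobian $J_p=r\,J_q+2r'\,q\,u^\top$, the gauge formula $J_q=\gamma^{-1}I-\gamma^{-2}u\nabla\gamma^\top$ with $J_qu=0$ from Euler's relation, and a trivial-kernel argument. The only change is the complement you split along:
\begin{itemize}
\item The paper splits $w$ along $u^\perp\oplus\operatorname{span}(u)$, which makes the rank-one term clean.
\item You split along $\ker\nabla\gamma^\top\oplus\operatorname{span}(u)$, which makes $J_q$ clean.
\end{itemize}
Either way the system is block-triangular.

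There is a small slip. Since $h\in\ker\nabla\gamma^\top$ need not be orthogonal to $u$, the correct expression is $J_pw=2r'\,(a\|u\|^2+u^\top h)\,q+\tfrac{r}{\gamma}h$. Your complementary-subspace argument still forces $h=0$ first and then $a=0$, so nothing breaks. You should also state explicitly, rather than writing ``if'', that differentiability of $p$ at an exterior point implies differentiability of $q=p/r(\|\cdot\|^2)$. It then implies differentiability of $\gamma$, for instance via $1/\gamma(u)=u^\top q(u)/\|u\|^2$. The paper is equally terse here, simply fixing a point where $\gamma$ is differentiable.

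Where you genuinely differ is on $\partial\mathcal C$. The paper folds boundary points into its ``exterior'' case ($u\notin\operatorname{Int}(\mathcal C)$) and differentiates $q=u/\gamma$ there. That is not legitimate, because $q=u$ on the interior side. The paper only remarks that $p$ is ``generally'' not differentiable on $\partial\mathcal C$.

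Your cone argument is valid and closes this gap. On all of $\mathcal C$, $p$ coincides with the $C^1$ map $F(x)=r(\|x\|^2)x$. The directions $y-u$, with $y$ in a ball around $0$ contained in $\mathcal C$, form an open set that spans $\mathbb R^n$. Hence any existing $J_p(u)$ must equal $J_F(u)$.

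However, the first observation you make already settles the case more simply. For $u\in\partial\mathcal C$ you have
\begin{itemize}
\item $p(tu)=r(t^2\|u\|^2)\,t\,u$ for $t\le 1$,
\item $p(tu)=r(t^2\|u\|^2)\,u$ for $t\ge 1$.
\end{itemize}
So the one-sided derivatives at $t=1$ differ by $r(\|u\|^2)\,u\neq 0$. Thus $p$ is differentiable at no boundary point, and the boundary case is vacuous.
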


\noindent The global invertibility of $J_p$ ensures that the mapping is \emph{full-rank} almost everywhere. Consequently, the gradient signal is preserved even when the raw output is far outside $\mathcal C$, strictly avoiding the rank-deficiency and optimization stalling characteristic of standard orthogonal projections.
\vspace{-0.5\baselineskip} 
\section{Optimization Guarantees}
\label{sec:opt-guarantees}

After establishing the geometry of the projection layer, we now analyze its implications for \emph{unconstrained optimization}. Consider a standard constrained task where we seek to minimize a loss $\ell(x)$ over the feasible set $\mathcal{C}$. By equipping the model with the soft-radial projection, we transform this into the unconstrained minimization of the \emph{composite objective} $f(u) \coloneqq \ell(p(u))$, where $u \in \mathbb{R}^n$ denotes the unconstrained candidate vector (\textit{e.g.}, the model output $u = g(z)$).

The following discussion translates the geometric properties of $p$ into concrete optimization guarantees for $f$, proceeding from the consistency of optimal values to algorithmic convergence rates.

\textbf{Optimal value equivalence.}
Our primary requirement is that solving the unconstrained formulation $f(u)$ is equivalent to solving the original constrained objective $\ell(x)$. The homeomorphism property of $p$ guarantees that the global minima align, meaning no valid solutions are lost and no spurious solutions are created.

\begin{restatable}{theorem}{EquivOptVal}\label{thm:value}
\textnormal{(Equivalence of optimal values).} Let $\ell:\mathcal C\to\mathbb R$ be continuous. With $f \coloneqq \ell\circ p$, we have
\[
\inf_{u\in\mathbb R^n} f(u)
\;=\;
\inf_{x\in\operatorname{Int}(\mathcal C)} \ell(x)
\;=\;
\inf_{x\in\mathcal C}\ell(x).
\]
Moreover, $\arg\min f\neq\emptyset$ if and only if $\arg\min_{\mathcal C}\ell$ intersects $\operatorname{Int}(\mathcal C)$. In that case, $u^\star\in\arg\min f$ if and only if $p(u^\star)\in\arg\min_{\mathcal C}\ell$.
\end{restatable}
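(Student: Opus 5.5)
The plan is to lean on Theorem~\ref{thm:homeomorphism}, which says that $p$ maps $\mathbb R^n$ bijectively onto $\operatorname{Int}(\mathcal C)$. Everything else reduces to comparing an infimum over $\operatorname{Int}(\mathcal C)$ with an infimum over its closure.

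\textbf{First equality.} Since $p$ is surjective onto $\operatorname{Int}(\mathcal C)$, we have $\{f(u):u\in\mathbb R^n\}=\{\ell(x):x\in\operatorname{Int}(\mathcal C)\}$. The two sets of values coincide, so their infima coincide.

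\textbf{Second equality.} The inequality $\inf_{\operatorname{Int}(\mathcal C)}\ell\ge\inf_{\mathcal C}\ell$ is trivial. For the reverse I use the standard convex-analysis fact that a closed convex set with nonempty interior equals the closure of its interior. To see this, take $x\in\mathcal C$ and an interior point $x_0$, with a ball $B(x_0,\varepsilon)\subseteq\mathcal C$. For $\lambda\in(0,1]$, convexity gives $(1-\lambda)x+\lambda B(x_0,\varepsilon)\subseteq\mathcal C$. Hence $x_\lambda\coloneqq(1-\lambda)x+\lambda x_0\in\operatorname{Int}(\mathcal C)$, and $x_\lambda\to x$ as $\lambda\to0$. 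Continuity of $\ell$ on $\mathcal C$ then gives $\ell(x_\lambda)\to\ell(x)$, so $\inf_{\operatorname{Int}(\mathcal C)}\ell\le\ell(x)$ for every $x\in\mathcal C$. The only care needed is that $\ell$ is continuous relative to $\mathcal C$, and the sequence $x_\lambda$ stays in $\mathcal C$, so this is fine. The equalities also hold when the common value is $-\infty$.

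\textbf{Characterization of minimizers.} Write $m$ for the common infimum.
\begin{itemize}
\item Suppose $u^\star\in\arg\min f$. Then $\ell(p(u^\star))=m=\inf_{\mathcal C}\ell$, so $m$ is finite. Since $p(u^\star)\in\operatorname{Int}(\mathcal C)$, the point $p(u^\star)$ lies in $\arg\min_{\mathcal C}\ell\cap\operatorname{Int}(\mathcal C)$, which is therefore nonempty.
\item Conversely, suppose $x^\star\in\arg\min_{\mathcal C}\ell\cap\operatorname{Int}(\mathcal C)$. By bijectivity set $u^\star=p^{-1}(x^\star)$. Then $f(u^\star)=\ell(x^\star)=m$, so $\arg\min f\neq\emptyset$.
\end{itemize}

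Now assume these sets are nonempty, so $m$ is attained. Then $u^\star\in\arg\min f$ iff $f(u^\star)=m$ iff $\ell(p(u^\star))=\inf_{\mathcal C}\ell$ iff $p(u^\star)\in\arg\min_{\mathcal C}\ell$. The middle step uses the equality of infima established above.

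\textbf{Main obstacle.} The only step with any content is the density of $\operatorname{Int}(\mathcal C)$ in $\mathcal C$, handled by the segment argument above. The rest is bookkeeping on top of the bijectivity from Theorem~\ref{thm:homeomorphism}; note that only surjectivity is really needed.
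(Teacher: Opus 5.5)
Your proposal is correct and follows essentially the same route as the paper: surjectivity of $p$ onto $\operatorname{Int}(\mathcal C)$ gives the first equality, the segment $(1-\lambda)x+\lambda x_0$ toward an interior point together with continuity of $\ell$ gives the second (the paper uses the anchor $u_0$ as $x_0$), and the minimizer characterization is the same bookkeeping via $p^{-1}$. One small addition on your side is the ball-inclusion argument showing that $x_\lambda$ is interior, which spells out a step the paper only asserts.
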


\textbf{First-order correspondence.} Beyond agreement of optimal values, optimization algorithms require that first-order stationarity of the unconstrained objective $f \coloneqq \ell\circ p$ be interpretable in terms of first-order conditions for the original constrained problem. Since $p(\mathbb R^n)\subset \operatorname{Int}(\mathcal C)$, the correspondence we establish is necessarily an \emph{interior} one: it relates stationary points of $f$ to points $x\in\operatorname{Int}(\mathcal C)$ satisfying $\nabla \ell(x)=0$. Boundary minimizers of $\ell$ over $\mathcal C$ do not necessarily satisfy $\nabla\ell=0$ and therefore cannot, in general, be characterized by stationarity of $f$. The next statement formalizes the interior correspondence via the chain rule and the fact that $J_p$ is invertible almost everywhere.

\begin{restatable}{proposition}{CorrStatPoints}\label{prop:stationary}
\textnormal{(Correspondence of stationary points).} Assume $\ell$ is $C^1$ on $\operatorname{Int}(\mathcal C)$. Wherever $p$ is differentiable, with $f=\ell\circ p$ we have
\[
\nabla f(u)\;=\;J_p(u)^\top\,\nabla\ell\big(p(u)\big).
\]
\end{restatable}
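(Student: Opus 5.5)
The identity is a direct application of the chain rule, so the plan is to verify its hypotheses at every point $u$ where $p$ is differentiable. We fix such a $u$ and set $x \coloneqq p(u)$. The first step is to locate $x$ in the region where $\ell$ is smooth. By Theorem~\ref{thm:homeomorphism}, $p$ maps $\mathbb R^n$ into $\operatorname{Int}(\mathcal C)$, so $x\in\operatorname{Int}(\mathcal C)$. Since $\operatorname{Int}(\mathcal C)$ is open and $\ell$ is $C^1$ there, $\ell$ is Fréchet differentiable at $x$ with derivative $\nabla\ell(x)^\top$. This matters because $\ell$ is only assumed $C^1$ on the interior, so the argument relies on $p$ never landing on $\partial\mathcal C$.

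The second step is to check that the composition is well defined near $u$. By Theorem~\ref{thm:homeomorphism}, $p$ is continuous, so there is a neighborhood $U$ of $u$ with $p(U)\subset B_\varepsilon(x)\subset\operatorname{Int}(\mathcal C)$ for a suitable $\varepsilon>0$. Hence $f=\ell\circ p$ is defined on $U$ and only involves values of $\ell$ where $\ell$ is $C^1$. (Continuity of $p$ alone already suffices here, since $p$ maps everything into the open set $\operatorname{Int}(\mathcal C)$; the homeomorphism property is more than needed.)

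The third step applies the chain rule for Fréchet derivatives. Because $p$ is differentiable at $u$ with Jacobian $J_p(u)$ and $\ell$ is differentiable at $p(u)$, $f$ is differentiable at $u$ with $Df(u)=\nabla\ell(p(u))^\top J_p(u)$. Transposing gives
\[
\nabla f(u)=J_p(u)^\top\nabla\ell\big(p(u)\big).
\]
For completeness I would quote the standard one-line estimate. Write
\[
p(u+h)=p(u)+J_p(u)h+o(\|h\|),
\qquad
\ell(x+k)=\ell(x)+\nabla\ell(x)^\top k+o(\|k\|),
\]
and substitute $k=p(u+h)-p(u)$, which satisfies $\|k\|=O(\|h\|)$.

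I do not expect a real obstacle. The only subtle point is that $p$ is merely almost-everywhere differentiable, which is Theorem~\ref{thm:differentiability}, so the identity is asserted only at points of differentiability. The chain rule still applies there, because it needs differentiability of the inner map only at the single point $u$, not on a neighborhood.

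As a remark, I would note the consequence that motivates the proposition. Theorem~\ref{thm:differentiability} makes $J_p(u)$ invertible at every such $u$, so $\nabla f(u)=0$ if and only if $\nabla\ell(p(u))=0$. This is exactly the interior stationarity correspondence described in the surrounding text.
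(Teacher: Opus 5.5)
Your proposal is correct and follows the same route as the paper. The paper applies the chain rule at points where $p$ is differentiable, using only that $p(u)\in\operatorname{Int}(\mathcal C)$, the open set on which $\ell$ is $C^1$. Your added detail, that the inner map need only be differentiable at the single point $u$, is what makes that one-line argument rigorous.

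One small refinement: the inclusion $p(\mathbb R^n)\subset\operatorname{Int}(\mathcal C)$ does not need Theorem~\ref{thm:homeomorphism}, and hence not Assumption~\ref{ass:radial}. Indeed $p(u)=(1-r)\,u_0+r\,q(u)$ with $r=r(\|u-u_0\|^2)\in[0,1)$, $u_0\in\operatorname{Int}(\mathcal C)$ and $q(u)\in\mathcal C$, so $p(u)$ already lies in the interior by convexity.
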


Consequently, at any point $u$ where the Jacobian $J_p(u)$ is invertible---which holds almost everywhere by Theorem~\ref{thm:differentiability}---the stationary conditions are equivalent:
\[
\nabla f(u) = 0
\quad\Longleftrightarrow\quad
\nabla\ell\big(p(u)\big) = 0.
\]
This implies that the unconstrained optimization landscape of $f$ introduces no spurious stationary points within the interior of $\mathcal{C}$.

\emph{Remark.} If a minimizer $x^\star$ of $\ell$ lies on the boundary $\partial\mathcal C$ and satisfies $\nabla\ell(x^\star)\neq 0$ (as is typical for constrained optima), Proposition~\ref{prop:stationary} implies that there is no stationary point $u$ corresponding to $x^\star$. In this case, minimizing sequences for $f$ must diverge, $\|u_k\|\to\infty$, pushing $p(u_k)$ toward the boundary.

\textbf{Global optimality for convex losses.} When $\ell$ is convex, any \emph{interior} stationary point is globally optimal. Since $p(\mathbb R^n)=\operatorname{Int}(\mathcal C)$ and Proposition~\ref{prop:stationary} shows that, at points where $J_p$ is invertible, stationarity of $f=\ell\circ p$ is equivalent to $\nabla\ell=0$ at the corresponding interior point, it follows that $p$ does not introduce spurious interior local minima.

\begin{restatable}{corollary}{GlobOptInt}\label{cor:global-opt}
\textnormal{(Global optimality of interior stationary points for convex losses).} Assume $\ell$ is convex and $C^1$ on $\operatorname{Int}(\mathcal C)$, and let $f \coloneqq \ell\circ p$. If $u$ is a local minimizer of $f$ at which $p$ is differentiable, then $p(u)$ is a global minimizer of $\ell$ over $\mathcal C$.
\end{restatable}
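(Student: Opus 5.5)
The argument is a chain: local minimality and differentiability give $\nabla f(u)=0$; invertibility of $J_p$ turns this into $\nabla\ell(p(u))=0$; convexity then makes that interior critical point a global minimizer on $\operatorname{Int}(\mathcal C)$, and a closure argument extends this to $\mathcal C$.

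\textbf{Step 1.} Let $u$ be a local minimizer of $f=\ell\circ p$ at which $p$ is differentiable. Since $p(u)\in\operatorname{Int}(\mathcal C)$ and $\ell$ is $C^1$ there, the chain rule of Proposition~\ref{prop:stationary} makes $f$ differentiable at $u$, with $\nabla f(u)=J_p(u)^\top\nabla\ell(p(u))$. A differentiable function has zero gradient at an interior local minimizer of its domain. Here the domain is all of $\mathbb R^n$, so $\nabla f(u)=0$.

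\textbf{Step 2.} Theorem~\ref{thm:differentiability} says $J_p(u)$ is invertible at every point where it exists, in particular at $u$. Hence $J_p(u)^\top$ is invertible, and $J_p(u)^\top\nabla\ell(p(u))=0$ forces $\nabla\ell(p(u))=0$.

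\textbf{Step 3.} Set $x^\star=p(u)\in\operatorname{Int}(\mathcal C)$. Convexity of $\ell$ and differentiability at the interior point $x^\star$ give the gradient inequality
\[
\ell(x)\ge \ell(x^\star)+\nabla\ell(x^\star)^\top(x-x^\star)=\ell(x^\star)
\]
for all $x$ in the convex open set $\operatorname{Int}(\mathcal C)$. To justify the gradient inequality for $x\in\operatorname{Int}(\mathcal C)$, use convexity along the segment: $\ell(x^\star+s(x-x^\star))\le (1-s)\ell(x^\star)+s\ell(x)$, subtract $\ell(x^\star)$, divide by $s$, and let $s\downarrow 0$. The segment stays in $\operatorname{Int}(\mathcal C)$, so the directional derivative equals $\nabla\ell(x^\star)^\top(x-x^\star)$.

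\textbf{Step 4.} Extend from $\operatorname{Int}(\mathcal C)$ to $\mathcal C$. Since $\mathcal C$ is closed and convex with nonempty interior, $\mathcal C=\overline{\operatorname{Int}(\mathcal C)}$. For a boundary point $x\in\partial\mathcal C$, apply convexity on the segment from $x^\star$ to $x$: for $s\in[0,1)$ the point $x^\star+s(x-x^\star)$ lies in $\operatorname{Int}(\mathcal C)$. The inequality $\ell(x^\star)\le\ell(x^\star+s(x-x^\star))\le(1-s)\ell(x^\star)+s\ell(x)$ then gives $s\,\ell(x^\star)\le s\,\ell(x)$, hence $\ell(x^\star)\le\ell(x)$. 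This needs only convexity of $\ell$ on $\mathcal C$, not continuity. Alternatively, Theorem~\ref{thm:value} gives $\inf_{\mathcal C}\ell=\inf_{\operatorname{Int}(\mathcal C)}\ell$ directly when $\ell$ is continuous on $\mathcal C$.

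The only delicate point is Step 4, where the differentiability hypotheses hold only on the interior. The segment argument handles it, so no step is a substantial obstacle; the result is essentially Proposition~\ref{prop:stationary} combined with Theorem~\ref{thm:differentiability} and standard convexity.
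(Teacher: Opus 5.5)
Your proof is correct and follows the paper's route: Fermat's rule gives $\nabla f(u)=0$, invertibility of $J_p(u)$ (Theorem~\ref{thm:differentiability}) combined with Proposition~\ref{prop:stationary} gives $\nabla\ell(p(u))=0$, and convexity finishes the argument. Your Step 4 is more careful than the paper, which simply asserts optimality ``over $\mathbb R^n$, and therefore also over $\mathcal C$'' although $\ell$ is only assumed convex and $C^1$ on $\operatorname{Int}(\mathcal C)$; your segment argument, or alternatively the appeal to Theorem~\ref{thm:value}, makes explicit that extending to boundary points really needs convexity or continuity of $\ell$ on all of $\mathcal C$.
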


\noindent\emph{Remark (Anchor).} At the anchor $u_0\in\operatorname{Int}(\mathcal C)$ we have $J_p(u_0)=r(0)I$, hence
\[
\nabla(\ell\circ p)(u_0)=J_p(u_0)^\top \nabla \ell(p(u_0))=r(0)\,\nabla\ell(u_0).
\]
Thus, the condition $r(0)>0$ (\textit{cf.}~Assumption \ref{ass:radial}) prevents the anchor from becoming an artificial stationary point.

\textbf{On PL-type conditions.} Since the non-linearity of the soft-radial projection $p$ induces non-convexity in the composite objective $f = \ell \circ p$, standard global convergence guarantees do not apply directly. A natural question is whether $f$ satisfies the weaker Polyak--\L{}ojasiewicz (PL) inequality, which would suffice to ensure linear convergence rates. However, even when $\ell$ is well-conditioned on $\mathcal C$, the saturation $r(\rho)\to 1$ as $\rho\to\infty$ can create regions where $\|\nabla f\|$ becomes arbitrarily small while the suboptimality $f-f^\star$ stays bounded away from $0$, ruling out a global PL bound.
\begin{restatable}{lemma}{AbsencePL}
\label{lem:pl}\textnormal{(Absence of a global PL inequality in general).}
Even if $\mathcal C\coloneqq B_1(0)$, $\ell(x) \coloneqq \|x\|^2$ and $u_0 \coloneqq 0$, there exists no function $r$ satisfying Assumption~\ref{ass:radial} such that $f=\ell\circ p$ satisfies a global PL inequality on $\mathbb R^n$.
\end{restatable}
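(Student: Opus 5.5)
Fix an arbitrary $r$ satisfying Assumption~\ref{ass:radial}. The plan is to compute $f$ explicitly, and then exhibit a sequence $u_k$ along which $\|\nabla f(u_k)\|^2/(f(u_k)-f^\star)\to 0$. That contradicts any inequality of the form $\tfrac12\|\nabla f(u)\|^2\ge \mu\,(f(u)-f^\star)$ with $\mu>0$.

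\textbf{Explicit form of $f$.} With $\mathcal C=B_1(0)$ and $u_0=0$, the hard radial projection is $q(u)=u$ for $\|u\|\le 1$ and $q(u)=u/\|u\|$ otherwise, i.e.\ $\bar t(v)=1$ for every direction. Hence
\[
p(u)=r(\|u\|^2)\,\min\{\|u\|,1\}\,\frac{u}{\|u\|}
\]
(with $p(0)=0$), and $f(u)=\|p(u)\|^2=\phi(\|u\|)$, where $\phi(t)\coloneqq r(t^2)^2\min\{t,1\}^2$. The infimum is $f^\star=0$, attained at $u=0$, consistent with Theorem~\ref{thm:value} since $\arg\min_{\mathcal C}\ell=\{0\}$ lies in the interior.

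\textbf{The bad sequence.} For $\|u\|=t>1$ we have $f(u)=r(t^2)^2$, which is $C^1$ there, so
\[
\|\nabla f(u)\|=|\phi'(t)|=4t\,r(t^2)\,r'(t^2).
\]
Meanwhile $f(u)-f^\star=r(t^2)^2\ge r(0)^2>0$ is bounded away from zero. It therefore suffices to find $t_k\to\infty$ with $t_k\,r'(t_k^2)\to 0$. Substituting $\rho=t^2$, this means $\sqrt{\rho}\,r'(\rho)$ has $\liminf$ equal to $0$ as $\rho\to\infty$.

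\textbf{Main step.} Suppose instead that $\sqrt{\rho}\,r'(\rho)\ge c>0$ for all $\rho\ge\rho_0$. Integrating gives
\[
r(\rho)-r(\rho_0)\ge 2c\bigl(\sqrt{\rho}-\sqrt{\rho_0}\bigr)\to\infty,
\]
which contradicts $r<1$. Note that $r'$ need not be monotone, so we only get a liminf rather than a limit. That is enough, because $r'$ is continuous and $\sqrt{\rho}\,r'(\rho)>0$, so we can choose $\rho_k\to\infty$ with $\sqrt{\rho_k}\,r'(\rho_k)\to 0$. Setting $u_k=\sqrt{\rho_k}\,e_1$ then gives $\|\nabla f(u_k)\|\to 0$ while $f(u_k)-f^\star\ge r(0)^2$. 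This violates PL for every $\mu>0$, and since $r$ was arbitrary, no admissible $r$ yields a global PL inequality.
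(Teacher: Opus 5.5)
Your proposal is correct and follows essentially the same route as the paper. Both restrict to a ray beyond the unit sphere, where $\|\nabla f\| = 4t\,r(t^2)\,r'(t^2)$ while $f-f^\star$ stays bounded away from zero, and both use the integrability of $r'$ to extract $\rho_k\to\infty$ with $\sqrt{\rho_k}\,r'(\rho_k)\to 0$. The only differences are cosmetic: you differentiate the radial profile $\phi$ directly, whereas the paper goes through the Jacobian formula for $J_p$.
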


\textbf{Gradient methods under bounded iterates.} Although a global PL inequality fails in general (Lemma~\ref{lem:pl}), optimization in practice is typically confined to a bounded region, either \emph{by design} (\textit{e.g.}, projected updates, trust regions, gradient clipping/weight decay) or \emph{by stability} of the dynamics. Since the soft-radial projection $p$ depends on the constraint set $\mathcal C$ through the radial map $q$, global smoothness of $p$ (and hence of $f=\ell\circ p$) cannot be taken for granted for arbitrary convex $\mathcal C$. We therefore record standard convergence guarantees under the common assumption that the iterates remain in a bounded region. We consider the minimization of the expected objective $F(\theta) \coloneqq \mathbb{E}_{z}[f(g_\theta(z))]$, where $f = \ell \circ p$, and $g_\theta$ is the neural network function parametrized by $\theta \in \Theta$.

\begin{restatable}{proposition}{Convergence}\label{prop:convergence}\textnormal{(Convergence of stochastic gradient descent).} Let $F(\theta) \coloneqq \mathbb{E}_{z}[\ell(p(g_\theta(z)))]$ with iterates confined to a compact set $K \subset \mathbb{R}^p$.
\begin{enumerate}[left=0pt, noitemsep, topsep=0pt, parsep=0pt, partopsep=0pt]
\item \textbf{Smooth Regime:} If $\mathbb{P}$ is absolutely continuous, $F$ is continuously differentiable and $L$-smooth on $K$. SGD with step size $\eta_t \propto T^{-1/2}$ satisfies:$$\min_{0 \le t < T} \mathbb{E}[\|\nabla F(\theta_t)\|^2] = \mathcal{O}(T^{-1/2}).$$
\item \textbf{Nonsmooth Regime:} For general distributions or non-smooth losses, $F$ is locally Lipschitz. Stochastic Subgradient Descent with diminishing steps $\eta_t \to 0$ converges asymptotically to the set of Clarke stationary points:$$\lim_{T \to \infty} \mathbb{E}\left[ \min_{v \in \partial F(\theta_T)} \|v\| \right] = 0.$$
\end{enumerate}
\end{restatable}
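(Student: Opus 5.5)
The plan is to reduce both regimes to standard results in stochastic nonconvex optimization. In each case the work is to verify the regularity hypotheses of $F$ on the compact set $K$ from what the paper has already established about $p$ (Lemma~\ref{lem:ray-regularity}, Theorem~\ref{thm:differentiability}) and from mild assumptions on $\ell$ and $g_\theta$. Throughout, I will assume that $g_\theta(z)$ is $C^1$ in $\theta$ and that $(z,\theta)\mapsto g_\theta(z)$ maps $\operatorname{supp}\mathbb P_Z\times K$ into a bounded set $U\subset\mathbb R^n$; this follows from continuity if $\mathcal Z$ is compact. I will also assume that $\ell$ is $C^1$ with a locally Lipschitz gradient on a neighbourhood of $p(U)$. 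With these assumptions, $p$ is Lipschitz on $U$ by local Lipschitzness of $q$ and the $C^1$ property of $r$. Hence $f=\ell\circ p$ is Lipschitz on $U$, and $\theta\mapsto f(g_\theta(z))$ is Lipschitz on $K$ uniformly in $z$.

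\textbf{Nonsmooth regime.} Since the composite map is uniformly locally Lipschitz, $F$ is locally Lipschitz by dominated convergence. Stochastic subgradients $J_{g_\theta}(z)^\top J_p(g_\theta(z))^\top\nabla\ell(p(g_\theta(z)))$ exist for almost every realization because $p$ is differentiable a.e. by Theorem~\ref{thm:differentiability}. These subgradients are uniformly bounded on $K$. I would then verify the hypotheses of the Davis--Drusvyatskiy--Kakade--Lee framework for stochastic subgradient methods. Its key structural requirement is a chain rule for the Clarke subdifferential, which holds for Whitney-stratifiable or definable functions. I would argue definability, and hence path-differentiability, when $\mathcal C$, $r$ and $\ell$ are semialgebraic or o-minimal, since $\alpha^\star$ is then definable. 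Boundedness of the iterates is given, and with $\sum\eta_t=\infty$ and $\sum\eta_t^2<\infty$ the cited theorem gives convergence to the Clarke stationary set. Passing to the expectation of $\operatorname{dist}(0,\partial F(\theta_T))$ follows by bounded convergence together with outer semicontinuity of $\partial F$.

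\textbf{Smooth regime.} The crux is that, although $p$ fails to be differentiable on a null set $N$ (the rays through kinks of $\partial\mathcal C$ and the sphere-like surface $\partial\mathcal C$ itself where $q$ switches branch), averaging over an absolutely continuous distribution smooths $F$. The plan is as follows.
\begin{enumerate}[left=0pt, noitemsep]
\item Show that for each $\theta$ the set $\{z: g_\theta(z)\in N\}$ is $\mathbb P_Z$-null. This uses absolute continuity of the pushforward of $\mathbb P$ under $g_\theta$; I will need a nondegeneracy assumption on $g_\theta$, or else take the noise to enter additively.
\item Differentiate under the integral via dominated convergence, using the uniform Lipschitz bound. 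This yields $\nabla F(\theta)=\mathbb E[J_{g_\theta}^\top J_p^\top\nabla\ell]$.
\item Establish Lipschitz continuity of $\nabla F$ on $K$. Here I would write the variation of $J_p$ across $N$ as a bounded jump supported on a set whose $\mathbb P$-measure, near a perturbed $\theta$, is $O(\|\Delta\theta\|)$ by a bounded density. Integrating the bounded jump over this thin tube yields an $L\|\Delta\theta\|$ bound.
\end{enumerate}
With $L$-smoothness and bounded stochastic-gradient variance, which follows from the uniform bounds on $K$, the classical Ghadimi--Lan descent argument applies. One sums $F(\theta_{t+1})\le F(\theta_t)-\eta\|\nabla F\|^2+\tfrac{L}{2}\eta^2\sigma^2$ over $T$ steps with $\eta\propto T^{-1/2}$ to get the $\mathcal O(T^{-1/2})$ rate.

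\textbf{Main obstacle.} The hardest step is step 3 of the smooth regime, namely bounding the tube measure near the nondifferentiability set $N$. For this I would first treat polyhedral $\mathcal C$, where $N$ is a finite union of cones and hyperplane pieces, and then hope to extend to general convex $\mathcal C$ via the a.e.\ second differentiability of convex functions (Alexandrov). If that extension fails, I would add a regularity assumption, such as bounded density together with a piecewise-$C^2$ boundary, to the statement.
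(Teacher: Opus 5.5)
\textbf{Relation to the paper's proof.} Your overall route is the paper's: Ghadimi--Lan once $L$-smoothness of $F$ is established, and the Davis--Drusvyatskiy--Kakade--Lee theorem for locally Lipschitz tame objectives in the nonsmooth regime.

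In the smooth regime you are more careful than the paper. Its proof only bounds $\|\nabla f(u)-\nabla f(u')\|\le(B_p^2L_\ell+G_\ell L_p)\|u-u'\|$ within a connected component of $\mathbb R^n\setminus\partial\mathcal C$. It also claims, incorrectly for polytopes, that $p$ is twice differentiable there, and then asserts without argument that averaging over an absolutely continuous $\mathbb P$ smooths out the jumps. Your tube estimate supplies exactly that missing argument, and your set $N$ correctly includes the exterior cones over kinks of $\partial\mathcal C$. Your nondegeneracy hypothesis on $(g_\theta)_\#\mathbb P_Z$ is genuinely necessary: for $g_\theta(z)=\theta$ one gets $F=f$, which is not differentiable on $\partial\mathcal C$ whatever $\mathbb P$ is.

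For step 3 you also need $r'$ locally Lipschitz and $\theta\mapsto J_{g_\theta}(z)$ Lipschitz uniformly in $z$; being $C^1$ is not enough. For general convex $\mathcal C$, Alexandrov's theorem is the wrong tool, since pointwise second-order expansions do not control tube volumes. What works, at least when $z$ enters additively, is the following. $D^2\gamma_{\mathcal C}$ is a locally finite measure and $\mathcal C$ has locally finite perimeter, so $\nabla f$ is locally BV. Averaging a locally BV function over translates with a bounded, compactly supported density then gives a locally Lipschitz function.

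\textbf{The genuine gap} is the final step of your nonsmooth regime. Davis et al.\ give, almost surely, that $F(\theta_T)$ converges and that every limit point of $(\theta_T)$ is Clarke stationary. You then pass to $\mathbb E[\operatorname{dist}(0,\partial F(\theta_T))]\to0$ via outer semicontinuity of $\partial F$. But outer semicontinuity only makes $\theta\mapsto\operatorname{dist}(0,\partial F(\theta))$ \emph{lower} semicontinuous: $\theta_T\to\bar\theta$ gives $\liminf_T\operatorname{dist}(0,\partial F(\theta_T))\ge\operatorname{dist}(0,\partial F(\bar\theta))$, which is the wrong direction.

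No argument can close this, because the displayed limit is false in general. Take $n=1$, $\mathcal C=[-1,1]$, $u_0=0$, $g_\theta(z)=\theta$ and the nonsmooth loss $\ell(x)=|x|$. Then $F(\theta)=|p(\theta)|$ equals $r(\theta^2)|\theta|$ for $|\theta|\le1$ and has $0$ as its only Clarke stationary point. The subgradient iterates converge to $0$ but, for generic $\theta_0$, never hit it. Eventually $\operatorname{dist}(0,\partial F(\theta_T))=r(\theta_T^2)+2\theta_T^2r'(\theta_T^2)$, which tends to $r(0)>0$.

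What you can prove is $\operatorname{dist}(\theta_T,\operatorname{crit}F)\to0$ almost surely, and hence in expectation by bounded convergence, where $\operatorname{crit}F$ is the Clarke stationary set. In subdifferential form this reads $\operatorname{dist}(0,\partial_\varepsilon F(\theta_T))\to0$ for every $\varepsilon>0$. Here $\partial_\varepsilon F(\theta)=\overline{\operatorname{conv}}\bigcup_{\|\theta'-\theta\|\le\varepsilon}\partial F(\theta')$ is the Goldstein enlargement. The statement must be weakened to one of these forms. The paper does not bridge this either; it simply attributes the displayed limit to Davis et al.

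\textbf{Two further points in that regime.} Both are shared with the paper but still need fixing.
\begin{enumerate}
\item Under general (e.g.\ atomic) distributions, nonsmooth $\ell$, or ReLU networks, Lebesgue-a.e.\ differentiability of $p$ does not make your gradient formula available almost surely. You need a measurable selection of Clarke subgradients instead.
\item Davis et al.\ require the oracle's mean to lie in $\partial F(\theta)$, whereas in general only $\partial F(\theta)\subseteq\mathbb E\bigl[\partial_\theta\,\ell(p(g_\theta(z)))\bigr]$ holds. Moreover, $F$ itself need not be definable for a general $\mathbb P$, since integration does not preserve definability. Either restrict to the empirical finite-sum objective and accept limit points with $0\in\sum_i\partial_\theta\,\ell(p(g_\theta(z_i)))$, or argue with conservative fields.
\end{enumerate}
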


\textit{Regularity and proof intuition.} The regularity of the objective $F(\theta)$ depends on the interplay between the data distribution and the composite function comprising the network $g_\theta$, the projection $p$, and the loss $\ell$. We identify the boundary of the feasible set $\partial \mathcal{C}$ (where $p$ exhibits kinks, see Thm.~\ref{thm:differentiability}) and the activation functions within $g_\theta$ as the primary sources of non-differentiability. These points constitute lower-dimensional manifolds in the output space. In the \emph{Smooth Regime}, the absolute continuity of $\mathbb{P}$ implies that the pre-activations $g_\theta(z)$ fall on these \emph{kinks} with probability zero. Consequently, the gradient of the expected objective is well-defined almost everywhere, and standard $L$-smoothness holds on compact sets. In the \emph{Nonsmooth Regime}, we rely on the fact that $p$ is locally Lipschitz. Since the composition of locally Lipschitz functions ($p$, $g_\theta$, $\ell$) preserves this property, the objective admits bounded Clarke subgradients, ensuring the stability of stochastic subgradient updates.

\textbf{Robustness to degeneracy.} A potential concern is the \emph{degenerate} scenario where the network $g_\theta$ collapses the data distribution onto the non-differentiable boundary $\partial \mathcal{C}$. In this case, the optimization effectively switches to subgradient descent. Unlike orthogonal projections where gradients may vanish (due to rank deficiency), the soft-radial projection maintains strict feasibility and non-zero subgradients at the boundary, providing valid descent directions to recover from such collapse.
\vspace{-0.5\baselineskip}
\section{Constrained Universal Approximation}
\label{sec:const-ua}

We next demonstrate that the soft-radial projection preserves the expressivity of the underlying model. To establish this, let $\mathcal{Z} \subset \mathbb{R}^d$ be a compact input space and consider a class of continuous functions $\mathcal{G} \coloneqq \{g_\theta: \mathcal{Z} \to \mathbb{R}^n \mid \theta \in \Theta\}$, such as sufficiently deep and wide neural networks~\citep{cybenko1989, hornik1991approximation}. We assume $\mathcal{G}$ is a \emph{universal approximator} in the space of continuous functions $C(\mathcal{Z}, \mathbb{R}^n)$; that is, for any continuous function $\phi \in C(\mathcal{Z}, \mathbb{R}^n)$ and $\delta > 0$, there exists a function $g \in \mathcal{G}$ such that
\[
\sup_{z \in \mathcal{Z}} \|g(z) - \phi(z)\| \le \delta.
\]
With $\mathcal{G}$ defined, we aim to show that the constrained model class $\{ p \circ g \mid g \in \mathcal{G} \}$ is also a universal approximator for targets in $\mathcal{C}$. A topological nuance here is that $p$ maps strictly into the \emph{open} interior $\operatorname{Int}(\mathcal{C})$. We must therefore prove that this class can approximate any continuous target $h: \mathcal{Z} \to \mathcal{C}$ arbitrarily well, even when the target's image contains points on the boundary $\partial\mathcal{C}$.

Recall from Theorem~\ref{thm:homeomorphism} that the soft-radial projection $p: \mathbb{R}^n \to \operatorname{Int}(\mathcal{C})$ is a homeomorphism.

\begin{restatable}{theorem}{ConstUnvAppr}\label{thm:ua}
\textnormal{(Universal approximation on $\mathcal{C}$).} Let $\mathcal{Z} \subset \mathbb{R}^d$ be compact and let $\mathcal{G}$ be a universal approximator on $\mathcal{Z}$. For every continuous function $h: \mathcal{Z} \to \mathcal{C}$ and every $\varepsilon > 0$, there exists a function $g \in \mathcal{G}$ such that the constrained predictor $p \circ g$ satisfies
\[
\sup_{z \in \mathcal{Z}} \|\, p(g(z)) - h(z) \,\| \;\le\;\varepsilon.
\]
\end{restatable}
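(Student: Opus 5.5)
The plan is to reduce to approximation of a target that takes values in the open interior, and then pull back through the homeomorphism $p$ of Theorem~\ref{thm:homeomorphism}. We work in the coordinate convention $u_0=0\in\operatorname{Int}(\mathcal C)$.

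\textbf{Step 1 (shrinking toward the anchor).} For $\lambda\in(0,1)$ set $h_\lambda(z)\coloneqq \lambda h(z)$. Since $\mathcal C$ is convex and $0\in\operatorname{Int}(\mathcal C)$, the standard line-segment principle gives $\lambda x\in\operatorname{Int}(\mathcal C)$ for every $x\in\mathcal C$. Hence $h_\lambda:\mathcal Z\to\operatorname{Int}(\mathcal C)$ is continuous. Because $\mathcal Z$ is compact, $h(\mathcal Z)$ is bounded, say $\|h(z)\|\le M$. Then $\|h_\lambda-h\|_\infty\le(1-\lambda)M\le\varepsilon/2$ once $\lambda$ is close enough to $1$.

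\textbf{Step 2 (a compact set strictly inside).} The set $K\coloneqq h_\lambda(\mathcal Z)=\lambda h(\mathcal Z)$ is compact and lies in $\operatorname{Int}(\mathcal C)$. Define $\phi\coloneqq p^{-1}\circ h_\lambda:\mathcal Z\to\mathbb R^n$. This map is continuous because $p^{-1}$ is continuous by Theorem~\ref{thm:homeomorphism}. Its image $K'\coloneqq p^{-1}(K)$ is compact, again because $p^{-1}$ is continuous.

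\textbf{Step 3 (uniform continuity of $p$ near $K'$).} The map $p$ is continuous on $\mathbb R^n$; indeed it is locally Lipschitz by Lemma~\ref{lem:ray-regularity} and continuity of $r$. It is therefore uniformly continuous on the compact neighborhood $K'_1\coloneqq\{u:\operatorname{dist}(u,K')\le1\}$. Choose $\delta\in(0,1]$ such that $u,u'\in K'_1$ with $\|u-u'\|\le\delta$ implies $\|p(u)-p(u')\|\le\varepsilon/2$. By the universal approximation property of $\mathcal G$, pick $g\in\mathcal G$ with $\|g-\phi\|_\infty\le\delta$. Then $g(z)\in K'_1$ for every $z$, so $\|p(g(z))-h_\lambda(z)\|=\|p(g(z))-p(\phi(z))\|\le\varepsilon/2$. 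Combining this with Step 1 through the triangle inequality gives the claim.

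\textbf{Main obstacle.} The delicate point is that $h$ may touch $\partial\mathcal C$, where $p^{-1}$ is undefined and $p^{-1}\circ h$ would be unbounded. Step 1 handles this by pushing the target strictly inside. The key facts needed are that $\lambda\mathcal C\subset\operatorname{Int}(\mathcal C)$ for $\lambda<1$ and that $\lambda h(\mathcal Z)$ is a compact subset of the interior. Compactness is what makes $p^{-1}\circ h_\lambda$ bounded and supplies the uniform continuity modulus in Step 3. Everything else is a routine $\varepsilon/2$ argument.
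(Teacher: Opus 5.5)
Your proposal is correct and follows essentially the same route as the paper: shrink $h$ toward the anchor to get an interior target within $\varepsilon/2$, pull it back through the homeomorphism $p^{-1}$, and approximate the continuous preimage by some $g\in\mathcal G$ whose values stay in a compact thickening of the preimage's image. The only differences are cosmetic. The paper treats the interior case separately and controls $\|p(g(z))-p(\phi(z))\|$ with a local Lipschitz constant $L_K$ of $p$ on that thickening, whereas you shrink unconditionally and use a uniform-continuity modulus, which is equally valid.
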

\vspace{-0.5\baselineskip}
\noindent\textit{Proof intuition.} The argument proceeds by density and composition. First, the convexity of $\mathcal{C}$ ensures that continuous functions mapping to the interior $\operatorname{Int}(\mathcal{C})$ are dense in the space of all feasible continuous functions $C(\mathcal{Z}, \mathcal{C})$; thus, any target $h$ admits a uniformly close approximation $h_\varepsilon$ that remains strictly interior. Second, because $p$ is a homeomorphism from $\mathbb{R}^n$ onto $\operatorname{Int}(\mathcal{C})$, there exists a continuous pre-image $\phi: \mathcal{Z} \to \mathbb{R}^n$ such that $p \circ \phi = h_\varepsilon$. The result then follows by applying the universal approximation property of $\mathcal{G}$ to approximate this unconstrained function $\phi$.

\noindent\emph{Remark (Unbounded sets $\mathcal{C}$).} The compactness of the input domain $\mathcal{Z}$ is crucial. It ensures that the image $h(\mathcal{Z})$ is bounded even if the feasible set $\mathcal{C}$ itself is unbounded. Consequently, the requisite pre-images in $\mathbb{R}^n$ remain within a compact subset, where the uniform approximation bounds of $\mathcal{G}$ apply.

\textbf{Approximation stability.} 
Finally, we ensure that the error bounds of the base estimator transfer to the constrained model. Let $K \subset \mathbb{R}^n$ be compact and let $L_K$ be a Lipschitz constant of $p$ on $K$. For any continuous function $\phi: \mathcal{Z} \to \mathbb{R}^n$ and any approximator $g \in \mathcal{G}$ satisfying $\phi(\mathcal{Z}) \subset K$ and $g(\mathcal{Z}) \subset K$, we have
\begin{equation*}
\sup_{z \in \mathcal{Z}} \| p(g(z)) - p(\phi(z)) \| \;\le\; L_K \sup_{z \in \mathcal{Z}}\| g(z) - \phi(z) \|.
\end{equation*}
This follows immediately by applying the Lipschitz bound for $p$ pointwise on $K$ and taking the supremum over $z \in \mathcal{Z}$.

\emph{Implication.} Composition with $p$ preserves expressivity and transfers uniform approximation guarantees up to a constant factor. Any uniform error bound for the base class $\mathcal{G}$ translates to the constrained class $p \circ \mathcal{G}$, scaled by the Lipschitz constant $L_K$ of the projection.
\vspace{-0.5\baselineskip}
\section{Numerical Experiments}
\label{sec:numerical-experiments}

While Section~\ref{sec:intro} emphasized safety, strict constraints are equally critical in operations where violations are prohibitive. We evaluate our framework on formulation \eqref{eq:task} via portfolio optimization and ride-sharing dispatch; see Appendix~\ref{app:extended-numerical-results} for extended experimental details. For our implementation, we treat the projection as a differentiable layer, allowing us to train the model end-to-end by backpropagating gradients directly through the projection operation to the base network $g_\theta$.\footnote{Code: \url{https://anonymous.4open.science/r/soft-radial-projection-e2e-icml/}}
\vspace{-0.5\baselineskip} 
\subsection{Implementation Details}
\label{subsec:implementation-details}

A critical computational advantage of the soft-radial projection is that the ray-boundary intersection scalar, $\alpha^\star(u)$, often admits an efficient closed-form solution. This contrasts sharply with standard layers based on orthogonal projection ($\min_{v \in \mathcal{C}} \|u - v\|^2$). For feasible sets formed by intersecting constraints (such as the capped simplex), orthogonal projection requires solving a constrained quadratic program (QP) for every sample in the batch. These QPs typically lack closed-form solutions, necessitating iterative numerical solvers---such as Dykstra's algorithm for intersections of convex sets \citep{dykstra1983algorithm, boyle1986} or ADMM for splitting complex composite constraints \citep{boyd2010}---that significantly slow down the forward pass. Our method circumvents this bottleneck by replacing the iterative QP with a direct, vectorized calculation.

\textbf{Choice of radial contraction $r$.}
Recall from Definition~\ref{def:soft-projection} that the radial contraction $r$ maps the squared distance $\rho \coloneqq \|u-u_0\|^2$ to a scaling factor in $[0, 1)$. We parameterize this contraction using smooth sigmoidal mappings satisfying Assumption~\ref{ass:radial}. Fixing parameters $\varepsilon\in(0,1)$ and $\lambda>0$, we consider three families:
\begin{align}
    & \text{(Rational)}      & r(\rho) &= \varepsilon + (1-\varepsilon)\frac{\rho}{\rho+\lambda}\label{eq:rational}\\
    & \text{(Exponential)}   & r(\rho) &= \varepsilon + (1-\varepsilon)\big(1 - e^{-\rho/\lambda}\big) \label{eq:exponential}\\
    & \text{(Hyperbolic)}    & r(\rho) &= \varepsilon + (1-\varepsilon)\tanh(\rho/\lambda)\label{eq:hyperbolic}
\end{align}
While all three forms guarantee strict interior feasibility, they differ in their asymptotic saturation rate (see Appendix~\ref{subsec:radial-contraction}). Note that the term $r(\rho)+2\rho\,r'(\rho)$ remains strictly positive bounded away from zero, ensuring the Jacobian is well-conditioned even at the anchor $u_0$.

\textbf{Computing the radial map.} The evaluation of $p(u)$ reduces to computing the scalar $\alpha^\star(u)$, defined as the distance from the anchor $u_0$ to the boundary $\partial\mathcal{C}$ along the ray $(u - u_0)$. 

\textit{Case 1: Polyhedra (Linear constraints).} For sets defined by linear inequalities $Ax \leq b$ (\textit{e.g.}, the simplex or capped simplex), the intersection time is computable exactly without iteration. Since $u_0$ is strictly interior ($b - A u_0 > 0$), the distance to the boundary is the minimum positive intersection time across all hyperplanes:
\begin{equation}
    \alpha^\star(u) \;=\; \min_{i :\, a_i^\top (u - u_0) > 0} \frac{b_i - a_i^\top u_0}{a_i^\top (u - u_0)}.
\end{equation}
This operation is $\mathcal{O}(m)$ per sample and is fully parallelizable.

\textit{Case 2: Euclidean balls.} For spherical constraints $\|x - c\|_2 \leq R$, the scalar $\alpha^\star(u)$ is the unique positive root of the quadratic equation $\| (u_0 + t(u-u_0)) - c \|^2 = R^2$, which admits a simple analytic solution.

\textit{Case 3: General convex sets.} For general convex sets defined by level sets $h(x) \leq 0$, finding $\alpha^\star$ is equivalent to finding the root of the scalar function $\phi(t) = h(u_0 + t(u-u_0))$. Since $\mathcal{C}$ is convex, $\phi(t)$ is convex and monotonic along the ray. The root can therefore be found to machine precision efficiently using Newton-Raphson or Bisection methods.

\textbf{Baselines.} To provide insights with different neural network architectures, we consider for $g_\theta(z)$ a multi-layer perceptron (MLP) and a Long Short-Term Memory (LSTM) \citep{hochreiter1997long} architecture. To benchmark the efficacy of the soft-radial projection, we compare against state-of-the-art constraint enforcement layers ($\text{Proj}(g_\theta(z))$) across three categories:
\begin{itemize}[leftmargin=*, nosep]
    \item \textit{Softmax (Simplex):} For standard simplex constraints, we utilize the ubiquitous Softmax function with temperature scaling $\tau$. While computationally cheap, it does not generalize to arbitrary convex sets.
    \item \textit{Orthogonal projection:} We compare against the standard Euclidean projection. As discussed in Section~\ref{sec:intro}, this method is idempotent—preserving the landscape for already-feasible points—but suffers from gradient saturation when inputs lie outside $\mathcal{C}$.
    \item \textit{Optimization-based layers:} We evaluate against \emph{Deep Constraint Completion and Correction} (DC3) \citep{dontidc3}, which enforces feasibility via gradient-based corrections, and \emph{HardNet} \citep{min2024hardnet}, a recent architecture specialized for affine constraints.\footnote{App.~\ref{app:competitor-methods} describes the implementation of the respective models for the capped simplex.}
\end{itemize}
\vspace{-0.5\baselineskip}
\subsection{Portfolio Optimization}
\label{subsec:portfolio-optimization}

We consider a financial market with $N$ assets over a finite horizon $T$. Let $(\Omega, \mathcal{F}, \mathbb{P})$ be a probability space equipped with a filtration $(\mathcal{F}_t)_{t \geq 0}$. At each time step $t$, the agent observes signals $z_t$--comprising raw returns augmented with rolling volatility and correlation to the market proxy computed over lookback $H$---and selects a portfolio weight vector \( w_t \in \mathcal{C} \subset \mathbb{R}^N \). Between steps $t-1$ and $t$, price relatives $y_t \in \mathbb{R}^N_+$ induce a passive drift, evolving weights to $w_{t-1}^{+} = (\operatorname{diag}(y_t)\,w_{t-1}) / (y_t^\top w_{t-1})$. The goal is to maximize the Sharpe ratio of the net returns $R^{\text{net}}_t$
\vspace{-0.25\baselineskip}
\begin{equation}
    \label{eq:sharpe_opt}
    \begin{array}{cl}
    \displaystyle \underset{\theta}{\text{max}} & \displaystyle \frac{\mathbb{E}[R^{\text{net}}_t]}{\sqrt{\operatorname{Var}(R^{\text{net}}_t)}} \\[1em]
    \text{s.t.} & \displaystyle R^{\text{net}}_{t} = w_{t-1}^{\top}y_{t} - \frac{\gamma}{2}\|w_{t} - w_{t-1}^{+}\|_{1}, \quad \forall t, \\[1em]
    & \displaystyle w_{t} = \text{Proj}_{\mathcal{C}}(g_{\theta}(z_{t})), \quad \forall t.
    \end{array}
\end{equation}

Here, $\gamma$ is the transaction cost parameter and the $L_1$ norm captures the turnover cost. As the $L_1$ norm is not differentiable at zero, we replace it in the implementation with the Pseudo-Huber loss. We employ the \textit{component-wise capped simplex}, which generalizes the standard simplex by enforcing asset-specific capacities $c \in (N^{-1}, 1]^N$ to control risk concentration
$\mathcal{C} \;=\; \left\{ w \in \mathbb{R}^N_+\;\middle|\; \mathds{1}^\top w = 1, \; w \leq c \right\}$.

\textbf{Results analysis.} Table~\ref{tab:portfolio_results} compares the net Sharpe ratio (SR) and turnover costs utilizing an LSTM architecture within an end-to-end learning framework across different constraint enforcement methods. 
On the standard simplex ($\Delta$), the soft-radial projection demonstrates superior stability, achieving a Sharpe ratio of $0.90\,(\pm 0.03)$ and significantly outperforming Softmax ($0.63$). Notably, the standard orthogonal Projection fails to converge to a profitable strategy ($0.25$), yielding high turnover ($0.48$). This suggests that the LSTM struggles to backpropagate effective gradients through the hard projection step, likely due to vanishing gradients at the constraint boundaries.
On the capped simplex ($\mathcal{C}$), baseline methods (O-Proj, DC3, HardNet) cluster around a Sharpe ratio of $0.62$--$0.64$ with higher variance, while the soft-radial projection maintains its performance ($0.90$) with minimal turnover ($0.05$). The reduced standard deviation ($\pm 0.02$) indicates that our method is not only more performant but also significantly more robust to initialization than competitive differentiable optimization layers.

\begin{table}[h]
\centering
\small
\caption{Portfolio results for \textit{Liquid} dataset, aggregated over five seeds (mean $\pm$ std) with $N=50$ assets, per-asset capacity $c=0.05$, and transaction cost $\gamma=0.1$; see App.~\ref{subsec:extended-analysis-portfolio-optimization}.}
\label{tab:portfolio_results}
\begin{tabular}{clcc}
\toprule
Feasible Set & Method & SR (Net) & Turnover \\
\midrule
\multirow{3}{*}{$\Delta$} & Softmax & $0.63\,(\pm 0.19)$ & $0.22\,(\pm 0.05)$\\
& O-Proj & $0.25\,(\pm 0.18)$ & $0.48\,(\pm 0.02)$\\
& Soft-Radial & $0.90\,(\pm 0.03)$ & $0.06\,(\pm 0.00)$\\
\midrule
\multirow{4}{*}{$\mathcal{C}$} & O-Proj & $0.64\,(\pm 0.09)$ & $0.22\,(\pm 0.01)$ \\
& DC3 & $0.63\,(\pm 0.08)$ & $0.23\,(\pm 0.01)$ \\
& HardNet & $0.62\,(\pm 0.11)$ & $0.19\,(\pm 0.01)$ \\
& Soft-Radial & $0.90\,(\pm 0.02)$ & $0.05\,(\pm 0.00)$ \\
\bottomrule
\end{tabular}
\end{table}
\vspace{-0.5\baselineskip} 
\subsection{Ride-Sharing Dispatch}
\label{subsec:ridesharing}

To demonstrate generalizability, we apply our framework to ride-sharing dispatch with a \emph{time-varying budget}. Unlike the fixed portfolio sum, the available fleet size $S_t\in\mathbb{N}_+$ fluctuates over time, causing the feasible set $\mathcal{C}_t$ to expand and contract. In practice, such constraints enforce per-zone capacity limits (\textit{e.g.}, congestion or emissions control) \cite{alonso2017demand, zardini2022analysis}. At each time step $t$, the agent observes a demand history window $z_t$ (augmented with sine/cosine encodings for day and week) and a current supply proxy $S_t$. We seek an allocation $a_t\in\mathbb{R}^N_+$ over $N$ zones to maximize the \textit{served rate} for the upcoming interval. We optimize the empirical average:
\begin{equation}
    \label{eq:taxi_opt}
    \begin{array}{cl}
        \displaystyle \max_\theta & \displaystyle \frac{1}{T} \sum_{t=1}^{T} \left[ \frac{\sum_{i=1}^N \operatorname{min}_\tau(a_{i,t}, d_{i,t})}{\sum_{i=1}^N d_{i,t}} \right] \\[1.5em]
        \text{s.t.} & a_t = \text{Proj}_{\mathcal{C}_t}(g_\theta(z_t)),\quad \forall t,
    \end{array}
\end{equation}
where $d_{t}$ is the realized demand in the interval following the observation window. We replace the hard minimum with the smooth SoftMin, $\operatorname{min}_\tau(x, y) = -\tau \log(e^{-x/\tau} + e^{-y/\tau})$, to maintain non-zero gradients even when supply exceeds demand ($a_{i,t} > d_{i,t}$). The constraint set is a scaled capped simplex
$\mathcal{C}_t \!= \!\left\{a \in \mathbb{R}^N_+ \;\middle|\; \mathds{1}^\top a = S_t, \; a \leq \kappa S_t \right\}$. We implement $\text{Proj}$ via a \textit{scaling transformation} (homothety): the network output is normalized by $S_t$, projected onto the canonical unit capped simplex, and subsequently rescaled by $S_t$. This effectively maps the dynamic constraint geometry to a fixed reference polytope.

\begin{table}[h]
\caption{Ride-sharing dispatch results, aggregated over five random seeds (mean $\pm$ std), for the top $N=150$ activity zones and zone capacity $\kappa=0.1$; see App.~\ref{subsec:extended-analysis-ride-sharing-dispatch}.}
\centering
\small
\begin{tabular}{clc}
\toprule
Feasible Set & Method & Served Rate \\
\midrule
$\Delta_t$ & Softmax & $0.84\,(\pm0.00)$ \\
\midrule
\multirow{4}{*}{$\mathcal{C}_t$} & O-Proj & $0.70\,(\pm 0.01)$ \\
& DC3 & $0.84\,(\pm 0.00)$ \\
& HardNet & $0.84\,(\pm 0.00)$ \\
& Soft-Radial & $0.84\,(\pm 0.00)$ \\
\bottomrule
\end{tabular}
\label{tab:taxi-results}
\end{table}

\textbf{Results analysis.} Table~\ref{tab:taxi-results} summarizes the served rate (demand fulfillment) using an MLP policy. Unlike the stochastic portfolio domain, the ride-sharing environment exhibits stable gradient signals, allowing the unconstrained Softmax baseline to reach a high served rate of $0.84$. In the constrained setting ($\mathcal{C}_t$), we observe a performance saturation: DC3, HardNet, and our proposed soft-radial projection converge to the same solution quality ($0.84 \pm 0.00$). This confirms that the soft-radial projection effectively recovers the optimal feasible policy, matching the performance of specialized differentiable optimization layers. Conversely, the standard orthogonal projection significantly underperforms ($0.70$), again highlighting the detrimental effect of hard boundary constraints on gradient flow during training.
\vspace{-0.5\baselineskip} 
\section{Conclusion}
\label{sec:conclusion}
This paper introduced \textit{soft-radial projection} to address the gradient saturation inherent in orthogonal projection layers. By constructing a radial homeomorphism to the interior of a convex set, we ensure strict feasibility and model expressivity (Theorem~\ref{thm:ua}) without the computational bottleneck of iterative solvers. Empirically, this approach yields superior solution quality compared to optimization-based baselines, effectively balancing theoretical rigor with practical efficiency. These results establish soft-radial projection as a robust, plug-and-play primitive for imposing hard constraints in end-to-end learning.

\textbf{Limitations.} Our framework relies on the convexity of the feasible set to ensure unique radial intersections, preventing immediate application to non-convex manifolds. Additionally, the method requires a known interior anchor point; while straightforward in our tested domains, automating anchor selection for arbitrary high-dimensional sets and analyzing the stability implications of dynamic anchors remain important directions for future work.

\bibliography{references}
\bibliographystyle{icml2026}

\appendix
\onecolumn

\section{Related Work}
\label{sec:related-work}

Enforcing hard constraints in neural networks is a fundamental challenge in safety-critical learning, requiring architectures that guarantee output feasibility without compromising optimization stability. The literature can be categorized by the mechanism of enforcement---ranging from soft relaxations and iterative solvers to rigorous geometric constructions. Table \ref{tab:comparison-methods} summarizes the properties of representative state-of-the-art approaches.

Broadly, constraint layers are distinguished by their dependency on the input context. \emph{Input-independent constraints} involve a feasible set $\mathcal{C}$ that is fixed across all samples (\textit{e.g.}, static actuator limits or portfolio weights), whereas \emph{input-dependent constraints} involve a dynamic geometry $\mathcal{C}(z)$ conditioned on the input. Our work primarily targets the former, establishing a rigorous geometric framework for fixed convex sets. While strict constraint satisfaction is essential, the quality of the learning signal---specifically the ability of backpropagation to provide informative gradients in high-noise regimes---remains under-explored. Existing approaches prioritize inference speed and feasibility during training and execution. However, the convergence behavior toward the true solution and the ultimate optimality gap are less understood, particularly when the enforcement layer significantly alters the unconstrained network output. While this review is not exhaustive---with ongoing developments ranging from accelerated orthogonal projections \cite{grontas2026pinet} to the use of decision rules for constraint enforcement \cite{constante2025enforcing}---different applications and noise profiles often demand alternatives to standard orthogonal projection to maintain superior optimization properties.

\begin{table*}[h]
\centering
\caption{\textbf{Comparison with State-of-the-Art Constraint Layers.}
We position Soft-Radial Projection among constructive geometric layers that guarantee feasibility without high-dimensional iterative solvers.}
\label{tab:comparison-methods}
\resizebox{\textwidth}{!}{%
\begin{tabular}{l l l l l l c}
\toprule
\textbf{Method} & \textbf{Constraint Scope} & \textbf{Input Dep.} & \textbf{Satisfaction} & \textbf{Computation} & \textbf{Gradient Landscape} & \textbf{Univ. Approx.} \\
\midrule
Soft-Penalty & Any & Yes & No (Penalty) & Closed-Form & Smooth (infeasible) & Yes \\
DC3 \cite{dontidc3} & General & Yes & Approx.$^\star$ & Iterative (GD) & Step-size sensitive & Unknown \\
RAYEN \cite{tordesillas2023rayen} & Convex (Lin/QC/SOC/LMI) & Limited & Strict & Closed-Form & Clipped / Flat & Unknown \\
HardNet-Aff \cite{min2024hardnet} & Affine & Yes & Strict & Closed-Form & Saturated (Boundary) & Yes \\
HP \cite{liang2024} & Ball-Homeomorphic & Yes & Strict & Bisection & Iterative search & Yes \\
HoP (Polar) \cite{deng2025hop} & Star-Convex & Yes & Strict (Int.) & Closed-Form$^\dagger$ & Polar stagnation & Unknown \\
RP \cite{konstantinov2023} & Convex (Lin/Quad) & Limited & Strict (Int.) & Closed-Form & Smooth (Saturating) & Unknown \\
\midrule
\rowcolor{gray!10}
\textbf{Soft-Radial (Ours)} & \textbf{Convex} & \textbf{No$^\ddagger$} & \textbf{Strict (Int.)} & \textbf{C.-F. / 1D Root} & \textbf{Full-rank Jacobian a.e.} & \textbf{Yes} \\
\bottomrule
\end{tabular}%
}
\vspace{2pt}
\footnotesize{
$^\star$Asymptotic (linear constraints); heuristic/approximate otherwise; truncated correction in practice. \\
$^\dagger$HoP requires a per-sample interior anchor and a ray--boundary intersection query. \\
$^\ddagger$Current focus is on fixed sets $\mathcal{C}$, though formulation supports dynamic anchors.
}
\end{table*}

\subsection{Approximation and Optimization Approaches}

\textbf{Soft Penalties and Activations.}
The most straightforward approach, favored by many industry practitioners, relaxes strict constraints into the training objective. Penalty-based methods augment the loss function with a regularization term proportional to the magnitude of constraint violations \cite{marquez2017imposing}. While this preserves the smoothness of the optimization landscape, it fundamentally fails to guarantee feasibility during inference, particularly in high-dimensional spaces where the volume of the feasible set relative to the ambient space can become negligible. Similarly, standard activation functions (\textit{e.g.}, Sigmoid, Softmax) offer computationally efficient bounds for elementary box or simplex constraints but lack the topological expressivity to model complex coupled constraints or polyhedral boundaries.

\textbf{Differentiable Optimization and Iterative Correction.} To ensure strict satisfaction, a second family of methods embeds optimization solvers directly into the network architecture. Differentiable optimization layers, such as OptNet \cite{amos2017optnet} and CVXPYLayers \cite{agrawal2019differentiable}, compute the orthogonal projection of a prediction onto $\mathcal{C}$ by solving a convex program during the forward pass. DC3 \cite{dontidc3} instead utilizes a \emph{completion--correction} scheme, where equality constraints are satisfied via variable completion and inequality violations are reduced through iterative gradient-based correction. While mathematically general, these approaches introduce significant computational overhead. Furthermore, backpropagating through these layers typically relies on implicit differentiation, which assumes the inner solver has reached a stationary point. When truncated for real-time efficiency (\textit{e.g.}, fixed-iteration limits), this assumption is violated, leading to conflict-prone gradient updates where the descent direction of the task loss contradicts the incomplete constraint correction.

\subsection{Constructive Geometric Layers}

To circumvent the computational cost of iterative solvers, recent research focuses on ``Feasibility by Construction'' via closed-form surjective mappings $\text{Proj}: \mathbb{R}^n \to \mathcal{C}$. We classify these geometric approaches into three distinct families:

\textbf{Gauge and Boundary Projections.}
RAYEN \cite{tordesillas2023rayen} enforces convex constraints by selecting a fixed interior point and \emph{rescaling} a predicted direction so that the endpoint remains feasible; the mapping is identity when the step stays inside the set, and otherwise returns the first ray--boundary intersection. HardNet-Aff \cite{min2024hardnet} targets \emph{input-dependent affine} inequalities via a closed-form pseudoinverse-based correction that clamps violated constraints to their bounds while moving in directions parallel to already-satisfied constraint boundaries. While both layers are computationally efficient, their piecewise ray/clamping structure can suppress learning signals when constraints are active (\textit{e.g}., RAYEN becomes insensitive to step magnitude after rescaling, and HardNet-Aff can attenuate gradient components in constraint-normal directions). This suppression can bias training toward boundary-feasible outputs and reduce interior exploration.

\textbf{Topological Diffeomorphisms.}
To utilize the set's interior, topological methods reshape the output space. Homeomorphic Projection (HP) \cite{liang2024} learns a bi-Lipschitz \emph{invertible neural network} (INN) that approximates a minimum-distortion homeomorphism between the constraint set and a unit ball. At inference, HP maps an infeasible prediction into the ball via the INN inverse and then performs a $k$-step bisection along the ray to the ball center, requiring repeated INN forward evaluations and feasibility checks; the resulting boundary-feasible point trades computation for accuracy through the bisection depth and depends on the learned mapping distortion.
Polar coordinate methods like HoP \cite{deng2025hop} attempt an analytic solution but introduce polar stagnation, singularities at the origin where angular components are undefined, resulting in numerical instability near the geometric center.

\textbf{Radial Interior Mappings.} 
Closely related to our work, \citet{konstantinov2023} propose a ``ray-projection'' layer that constructs feasible points by scaling a direction vector $r$ from an interior anchor. While effective, this requires a specialized architecture that learns separate polar coordinates ($g(r, s)$) rather than a direct map of the ambient Euclidean space. Furthermore, these constructions typically do not furnish a global bijection between $\mathbb{R}^n$ and $\operatorname{Int}(\mathcal{C})$, potentially resulting in saturating radial behavior due to the choice of bounded scaling functions (\textit{e.g.}, sigmoids).

\textbf{Our Contribution: Soft-Radial Projection.} We propose a drop-in reparameterization layer for fixed closed convex sets with a non-empty interior. Our mapping acts directly on the unconstrained ambient prediction $u \in \mathbb{R}^n$ as a unified radial homeomorphism $\mathbb{R}^n \to \operatorname{Int}(\mathcal{C})$. For standard convex geometries---such as balls, ellipsoids, and boxes---the mapping admits a closed-form solution. For general convex sets, we utilize an efficient monotonic 1D root-finding procedure. Crucially, we ensure the mapping is differentiable with an invertible Jacobian almost everywhere, preventing the rank-collapse characteristic of boundary projections. This ensures the learning signal remains non-vanishing even for predictions far outside the feasible set, while preserving the Universal Approximation property of the base network (Theorem \ref{thm:ua}).

\section{Proofs}
\label{app:proofs}

In the following sections, we provide complete proofs for the theoretical results. To facilitate readability, we restate each claim prior to its proof.

\subsection{Proofs of Section \ref{sec:soft-projection}}

We begin by establishing the regularity of the individual projection components.

\RayRegularity*

\begin{proof}
\emph{Ray feasibility and intersection.} Fix $u\in\mathbb R^n$ and consider scalars $\alpha\in[0,1]$ such that $\alpha u\in\mathcal C$. The map $\alpha\mapsto \alpha u$ is continuous and $\mathcal C$ is closed, so the set of feasible $\alpha\in[0,1]$ is closed in $[0,1]$ and nonempty (since $\alpha=0$ is always feasible). Convexity of $\mathcal C$ implies this set is convex in $\mathbb R$, and any nonempty closed convex subset of $\mathbb R$ is an interval; hence it is of the form $[0,\alpha^\star(u)]$ for a unique $\alpha^\star(u)\in[0,1]$.

\emph{Global Lipschitz continuity of $\alpha$.} Let $\gamma_{\mathcal{C}}:\mathbb R^n\to[0,\infty)$ denote the Minkowski functional (gauge) of $\mathcal C$,
$$\gamma_{\mathcal{C}}(x) \coloneqq \inf\{t>0 : x\in t\mathcal C\}.$$
Standard properties of the gauge (see, \textit{e.g.}, \citet{rockafellar1970convex}, Thm.~14.5 and Chap.~13) imply that $\gamma_{\mathcal{C}}$ is sublinear (convex and positively homogeneous) and finite on $\mathbb R^n$ when $0\in\operatorname{Int}(\mathcal C)$. Because $0\in\operatorname{Int}(\mathcal C)$, there exists $\delta>0$ with $B(0,\delta)\subset\mathcal C$, which yields $\gamma_{\mathcal{C}}(x)\le \|x\|/\delta$ for all $x$ and, by subadditivity of $\gamma_{\mathcal{C}}$,
\[
\gamma_{\mathcal{C}}(x)-\gamma_{\mathcal{C}}(y) \le \gamma_{\mathcal{C}}(x-y) \le \frac{\|x-y\|}{\delta}
\quad\forall\;x,y\in\mathbb R^n,
\] 
and 
\[
\gamma_{\mathcal{C}}(y)-\gamma_{\mathcal{C}}(x) \le \gamma_{\mathcal{C}}(y-x) \le \frac{\|y-x\|}{\delta}
\quad\forall\;x,y\in\mathbb R^n.
\]
Taking the maximum of the two inequalities shows
\[
|\gamma_{\mathcal{C}}(x)-\gamma_{\mathcal{C}}(y)| \le \frac{\|x-y\|}{\delta}
\quad\forall\;x,y\in\mathbb R^n,
\]
so $\gamma_{\mathcal{C}}$ is Lipschitz with constant $1/\delta$.

For $u\neq 0$, the feasibility of $\alpha u\in\mathcal C$ is equivalent to $\alpha\le1/\gamma_{\mathcal{C}}(u)$. Therefore, the feasible $\alpha\in[0,1]$ form the interval $[0,\min\{1,1/\gamma_{\mathcal{C}}(u)\}]$, implying that $\alpha^\star(u) = \min\bigl(1,1/\gamma_{\mathcal{C}}(u)\bigr)$, for $u\neq 0$, while for $u=0$ we clearly have $\alpha^\star(u)=1$. Define a helper function $\psi:[0,\infty)\to(0,1]$ by $\psi(t) \coloneqq \min(1,1/t)$ for $t>0$ and $\psi(0) \coloneqq 1$. The function $\psi$ is $1$–Lipschitz on $[0,\infty)$, so for all $u,v\in\mathbb R^n$,
\begin{align*}
|\alpha^\star(u)-\alpha^\star(v)| &= |\psi(\gamma_{\mathcal{C}}(u))-\psi(\gamma_{\mathcal{C}}(v))|\\
&\le |\gamma_{\mathcal{C}}(u)-\gamma_{\mathcal{C}}(v)|\le \frac{\|u-v\|}{\delta}.
\end{align*}
Thus $\alpha^\star$ is globally Lipschitz with constant $1/\delta$.

\emph{Local Lipschitz continuity of $q$.} We now bound the variation of $q(u)=\alpha^\star(u)u$ on bounded sets. Let $R>0$ and assume $\|v\|\le R$ and $\|u\|\le R$.
Then
\begin{align*}
\|q(u)-q(v)\| &= \|\alpha^\star(u)u-\alpha^\star(v)v\| \\
&\le \|\alpha^\star(u)(u-v)\| + \|v(\alpha^\star(u)-\alpha^\star(v))\| \\
&\le \|u-v\| + \|v\|\cdot|\alpha^\star(u)-\alpha^\star(v)| \\
&\le \|u-v\| + \frac{\|v\|}{\delta}\,\|u-v\| \\
&\le \Bigl(1+\frac{R}{\delta}\Bigr)\|u-v\|.
\end{align*}
Hence $q$ is $(1+R/\delta)$–Lipschitz on the ball $\{u:\|u\|\le R\}$, which implies that $q$ is locally Lipschitz on $\mathbb R^n$, completing the proof.
\end{proof}

Next, we analyze the structural properties of the constructed projection map.

\RegComponents*

\begin{proof}
\emph{Continuity of $\bar t$.} We relate $\bar t(v)$ to the Minkowski functional (gauge) $\gamma_{\mathcal{C}}$. By definition, $\bar t(v) = 1/\gamma_{\mathcal{C}}(v)$ (with the convention $1/0 = \infty$). Since $\gamma_{\mathcal{C}}$ is continuous on $\mathbb R^n$ (Lemma~\ref{lem:ray-regularity}) and the inversion map is continuous on $[0, \infty)$ into $(0, \infty]$, the composition $v \mapsto \bar t(v)$ is continuous on the unit sphere.

\emph{Properties of $\psi_v$.} For $t \le \bar t(v)$, we have $\psi_v(t) = r(t^2)t$. Strict monotonicity on this interval was established immediately following Assumption~\ref{ass:radial}. For $t > \bar t(v)$ (which implies $\bar t(v) < \infty$), we have $\psi_v(t) = r(t^2)\bar t(v)$, which is strictly increasing simply because $r$ is increasing. Continuity holds at $t=\bar t(v)$ as both expressions coincide. Finally, since $\psi_v(0)=0$ and $\lim_{t\to\infty} \psi_v(t) =\bar t(v)$, the range is exactly $[0, \bar t(v))$.
\end{proof}

Building on these established properties, we prove that $p$ is indeed a homeomorphism.

\Homeomorphism*

\begin{proof}

We first establish that $p$ is a bijection.

\emph{Injectivity.} Let $u_1, u_2 \in \mathbb{R}^n$ with $p(u_1) = p(u_2)$. Since $p$ preserves directions (\textit{i.e.}, $p(u) \in \mathbb{R}_+ u$), $u_1$ and $u_2$ must lie on the same ray, so $u_i = t_i v$ for some unit vector $v$ and $t_i \ge 0$.The scalar map $\psi_v$ from Definition~\ref{def:scalar-map} satisfies $p(t_i v) = \psi_v(t_i)v$. By Lemma~\ref{lem:reg-comp}, $\psi_v$ is strictly increasing, so $\psi_v(t_1) = \psi_v(t_2)$ implies $t_1 = t_2$, and hence $u_1 = u_2$.

\emph{Surjectivity.} Let $x \in \operatorname{Int}(\mathcal{C})$. If $x=0$, $x=p(0)$. If $x \neq 0$, set $v = x/\|x\|$ and $s = \|x\|$. Since $x \in \operatorname{Int}(\mathcal{C})$, we strictly have $s < \bar{t}(v)$. By Lemma~\ref{lem:reg-comp}, the continuous scalar map $\psi_v$ has range $[0, \bar{t}(v))$. By the Intermediate Value Theorem, there exists a unique $t$ such that $\psi_v(t) = s$, yielding $p(tv) = x$. Moreover, $p$ is continuous on $\mathbb{R}^n$ as it is composed of the continuous scaling function $r$ (Assumption~\ref{ass:radial}) and the continuous hard radial projection $q$ (Lemma~\ref{lem:ray-regularity}). Thus, $p$ is a continuous bijection from $\mathbb{R}^n$ onto $\operatorname{Int}(\mathcal{C})$.

\emph{Continuity of Inverse.} Since $p$ is a continuous injection from $\mathbb{R}^n$ into an open subset $\mathbb{R}^n$, the Invariance of Domain theorem \cite{Brouwer1912} guarantees that $p$ is an open map. Because $p$ is a continuous bijection that maps open sets to open sets, it is a homeomorphism between $\mathbb{R}^n$ and its image $\operatorname{Int}(\mathcal{C})$.
\end{proof}

Finally, we analyze the differentiability of $p$, establishing that it is a diffeomorphism almost everywhere.

\DiffInvert*

\begin{proof}
By Lemma~\ref{lem:ray-regularity}, $q$ is locally Lipschitz and therefore differentiable almost everywhere by Rademacher's theorem. Since $r$ is $C^1$, the product $p(u)=r(\|u\|^2)q(u)$ shares the same domain of differentiability. Whenever $J_p(u)$ exists, the product rule gives
\begin{equation}
\label{eq:p-jacobian}
J_p(u) \;=\; r(\|u\|^2)\,J_q(u) \;+\; 2\,r'(\|u\|^2)\,q(u)\,u^\top.
\end{equation}
We analyze the invertibility of this Jacobian in two cases based on the position of $u$ relative to $\mathcal C$. Note that $p$ is generally not differentiable for $u\in\partial\mathcal C$ because the map $\alpha^\star(u)$ (and thus $q$) typically exhibits a kink at the boundary where the active regime switches.

\noindent\emph{Interior points.} For $u\in\operatorname{Int}(\mathcal C)$ we have $q(u)=u$ and $J_q(u)=I$, hence$$J_p(u)
= r(\|u\|^2)\,I + 2\,r'(\|u\|^2)\,u u^\top.$$For $u\neq0$, this matrix has eigenvector $u/\|u\|$, with corresponding eigenvalue $r(\|u\|^2) + 2\,\|u\|^2\,r'(\|u\|^2)$ and all vectors orthogonal to $u$ are eigenvectors with eigenvalue $r(\|u\|^2)$. Assumption~\ref{ass:radial} gives $r(\rho)>0$ and $r(\rho)+2\rho\,r'(\rho)>0$ for all $\rho\ge0$, so $J_p(u)\succ0$ is invertible for all $u\in\operatorname{Int}(\mathcal C)\setminus\{0\}$. At the origin, $q$ coincides with the identity in a neighborhood of $0$ (because $0\in\operatorname{Int}(\mathcal C)$), so $J_q(0)=I$ and the rank-one term vanishes, giving $J_p(0)=r(0)\,I$, which is invertible since $r(0)>0$.

\noindent\emph{Exterior points.} For points outside the interior ($u \notin \operatorname{Int}(\mathcal{C})$), $q(u)$ lies on the boundary $\partial \mathcal C$. Using the gauge $\gamma_{\mathcal{C}}$ of $\mathcal C$ (the Minkowski functional), the condition $u \notin \operatorname{Int}(\mathcal{C})$ implies $\gamma_{\mathcal{C}}(u) \ge 1$, and the hard radial projection is given by $q(u) = u / \gamma_{\mathcal{C}}(u)$. Now fix a point $u$ where $\gamma_{\mathcal{C}}$ is differentiable (which holds almost everywhere by standard results on convex functions, see, \textit{e.g.}, \citet{rockafellar1970convex}, \S 25). To simplify notation, let $\lambda \coloneqq \gamma_{\mathcal{C}}(u)\ge 1$. Differentiating $q(u)$ with respect to $u$ yields the Jacobian $$J_q(u) \;=\; \frac{1}{\lambda}I - \frac{1}{\lambda^2}\,u\,\nabla \gamma_{\mathcal{C}}(u)^\top.$$We establish that $J_p(u)$ is invertible by showing its kernel is trivial. Let $h \in \mathbb{R}^n$ satisfy $J_p(u)h = 0$. We decompose $h$ with respect to the subspace $W = \{w \in \mathbb{R}^n \mid u^\top w = 0\}$. Any $h$ can be uniquely written as $h = h_W + \beta u$, where $h_W \in W$ and $\beta \in \mathbb{R}$.

First, we compute the product of $J_q(u)$ with the radial component. Since $\gamma_{\mathcal{C}}$ is positively homogeneous of degree 1 by definition, Euler's homogeneous function theorem implies $\nabla \gamma_{\mathcal{C}}(u)^\top u = \gamma_{\mathcal{C}}(u) = \lambda$. Using this relation, we find
\[
J_q(u)u \;=\; \frac{1}{\lambda}u - \frac{1}{\lambda^2}u \underbrace{(\nabla \gamma_{\mathcal{C}}(u)^\top u)}_{\lambda} \;=\; \frac{1}{\lambda}u - \frac{1}{\lambda}u \;=\; 0.
\]
For the orthogonal component $h_W$, we have
\[
J_q(u)h_W \;=\; \frac{1}{\lambda} h_W - \frac{\nabla \gamma_{\mathcal{C}}(u)^\top h_W}{\lambda^2} u.
\]
Substituting $h = h_W + \beta u$ and the expression for $J_p(u)$ \eqref{eq:p-jacobian} into the condition $J_p(u)h = 0$ yields
\[
r(\|u\|^2) J_q(u)h_W + 2 \beta \|u\|^2 r'(\|u\|^2) q(u) \;=\; 0.
\]
Substituting the expansion of $J_q(u)h_W$ allows us to group terms proportional to $u$. The equation becomes
\[
\frac{r(\|u\|^2)}{\lambda} h_W \;+\; \mu u = 0,
\]
where $\mu \in \mathbb{R}$ captures all radial coefficients:
\[
\mu = 2 \beta \|u\|^2 r'(\|u\|^2) \frac{1}{\lambda} - \frac{r(\|u\|^2) \nabla \gamma_{\mathcal{C}}(u)^\top h_W}{\lambda^2}.
\]
Projecting the vector equation onto the subspace $W$ eliminates $\mu u$, yielding
\[
\frac{r(\|u\|^2)}{\lambda} h_W = 0.\]
Since $r > 0$ and $\lambda \ge 1$, we must have $h_W = 0$. Consequently, the radial term must also vanish, so $\mu u = 0$, implying $\mu = 0$. With $h_W=0$, the expression for $\mu$ simplifies to
\[
2 \beta \|u\|^2 r'(\|u\|^2) \frac{1}{\lambda} = 0.
\]
Assumption~\ref{ass:radial} ensures $r'(\rho) > 0$ for $\rho > 0$, so we must have $\beta = 0$. Thus $h = 0$, proving that $J_p(u)$ is invertible. Combining the results for interior and exterior points, we conclude that $p$ is differentiable almost everywhere and $J_p(u)$ is invertible wherever it exists.
\end{proof}

\subsection{Proofs of Section \ref{sec:opt-guarantees}}

We begin by establishing the equivalence of the optimal values between the unconstrained composite optimization $f(u)$ and the constrained optimization objective.

\EquivOptVal*

\begin{proof}
By bijectivity of $p:\mathbb R^n\to\operatorname{Int}(\mathcal C)$,
\[
\inf_{u\in\mathbb R^n} f(u)=\inf_{x\in\operatorname{Int}(\mathcal C)} \ell(x).
\]
Since $u_0\in\operatorname{Int}(\mathcal C)$ and $\mathcal C$ is convex, for any $x\in\mathcal C$ and $\varepsilon\in(0,1]$,
$x_\varepsilon\coloneqq(1-\varepsilon)x+\varepsilon u_0\in\operatorname{Int}(\mathcal C)$ and $x_\varepsilon\to x$ as $\varepsilon\downarrow 0$.
By continuity, $\ell(x_\varepsilon)\to\ell(x)$, hence $\inf_{\operatorname{Int}(\mathcal C)}\ell\le \ell(x)$ for all $x\in\mathcal C$,
implying $\inf_{\operatorname{Int}(\mathcal C)}\ell\le \inf_{\mathcal C}\ell$.
Since $\operatorname{Int}(\mathcal C)\subseteq\mathcal C$, we also have $\inf_{\mathcal C}\ell\le \inf_{\operatorname{Int}(\mathcal C)}\ell$.
Thus the optimal values coincide.

Finally, $\arg\min f\neq\emptyset$ holds iff there exists $x^\star\in\operatorname{Int}(\mathcal C)$ with
$\ell(x^\star)=\inf_{x\in\mathcal C}\ell(x)$, in which case $u^\star=p^{-1}(x^\star)$ exists and satisfies $u^\star\in\arg\min f$.
Conversely, if $u^\star\in\arg\min f$, then $p(u^\star)\in\operatorname{Int}(\mathcal C)$ satisfies
$\ell(p(u^\star))=\inf_{x\in\mathcal C}\ell(x)$, \textit{i.e.}, $p(u^\star)\in\arg\min_{\mathcal C}\ell\cap\operatorname{Int}(\mathcal C)$.
\end{proof}

Next, we establish the equivalence of stationary points between the two objectives.

\CorrStatPoints*

\begin{proof}
The gradient identity follows from the chain rule, using that $p(u)\in\operatorname{Int}(\mathcal C)$ for all $u$. If $J_p(u)$ is invertible and $\nabla f(u)=J_p(u)^\top\nabla\ell(p(u))=0$, then $\nabla\ell(p(u))=0$. The converse is immediate.
\end{proof}

We then extend this analysis to show that local minimizers of the composite objective correspond to global minimizers of the constrained problem.

\GlobOptInt*
\begin{proof}
Let $u$ be a local minimizer of $f$ at which $p$ is differentiable. Then $J_p(u)$ is invertible by Theorem \ref{thm:differentiability}, and $\nabla f(u)=0$. By Proposition~\ref{prop:stationary}, $\nabla \ell(p(u))=0$. Since $p(u)\in\operatorname{Int}(\mathcal C)$ and $\ell$ is convex, $\nabla \ell(p(u))=0$ implies that $p(u)$ is a global minimizer of $\ell$ (over $\mathbb R^n$, and therefore also over $\mathcal C$).
\end{proof}

We now investigate the convergence rate and demonstrate the absence of a global Polyak-\L{}ojasiewicz (PL) condition, which necessitates a more nuanced convergence analysis.

\AbsencePL*
\begin{proof}
The unique minimizer of $\ell$ over $\mathcal C$ is $x^\star=0$, and the corresponding optimal value $\ell^\star=0$. Fix a unit vector $v$ and consider $u(t)\coloneqq t v$ with $t\ge 0$. For $\mathcal C=B_1(0)$ the hard radial projection is $q(u)=u$ if $\|u\|\le 1$ and $q(u)=u/\|u\|$ otherwise; hence for $t\ge 1$ we have $q(u(t))=v$ and therefore 
\[
p(u(t)) \;=\; r(t^2)\,v,
\qquad
f(u(t)) \;=\; \|p(u(t))\|^2 \;=\; r(t^2)^2.
\]
Since $r(t^2)\to 1$ as $t\to\infty$, we obtain 
\[
f(u(t)) - f^\star \;=\; r(t^2)^2 \;\to\; 1 \quad (t\to\infty),
\]
so along this ray the suboptimality converges to a strictly positive constant. Next, compute the gradient for $t>1$. On $\{\|u\|>1\}$, the hard radial projection $q(u)=u/\|u\|$ has Jacobian 
\[
J_q(u) \;=\; \frac{1}{\|u\|}\Bigl(I-\frac{u u^\top}{\|u\|^2}\Bigr),
\]
so at $u(t)=t v$
\[
J_q(u(t)) \;=\; \frac{1}{t}\bigl(I-v v^\top\bigr).
\]
Substituting the expression for the Jacobian $J_p(u)$ given in \eqref{eq:p-jacobian}, we obtain for $t>1$
\[
J_p(u(t))=
\frac{r(t^2)}{t}(I-v v^\top) \;+\; 2t\,r'(t^2)\,v v^\top.
\]
By Proposition~\ref{prop:stationary}, the gradient is given by $\nabla f(u) = J_p(u)^\top \nabla \ell(p(u))$. Since $\ell(x)=\|x\|^2$ implies $\nabla \ell(x) = 2x$, and specifically $p(u(t)) = r(t^2)v$, we obtain
\[
\nabla f(u(t)) \;=\; J_p(u(t))^\top \bigl( 2 r(t^2) v \bigr) \;=\; 2 r(t^2) J_p(u(t)) v,
\]
where the last equality follows from the symmetry of $J_p(u(t))$. When applying the Jacobian matrix to the vector $v$, the tangential term vanishes because $(I - v v^\top)v = 0$. Only the radial term remains
\[
J_p(u(t)) v \;=\; \left( 2t r'(t^2) v v^\top \right) v \;=\; 2t r'(t^2) v.
\]
Substituting this back yields the final gradient norm
\[
\|\nabla f(u(t))\| \;=\; \| 2 r(t^2) \cdot 2t r'(t^2) v \| \;=\; 4\,t\,r(t^2)\,r'(t^2).
\]
It remains to show the existence of a sequence $t_k\to\infty$ such that $t_k r'(t_k^2)\to 0$. Since $\lim_{\rho\to\infty} r(\rho)=1$, we have $\int_0^\infty r'(\rho)\,d\rho = 1-r(0) < \infty$. We claim there exists a sequence $\rho_k\to\infty$ such that $\sqrt{\rho_k}\,r'(\rho_k)\to 0$. Indeed, if not, then there exist $\varepsilon>0$ and $\rho_0\geq0$ such that $\sqrt{\rho}\,r'(\rho)\ge \varepsilon$ for all $\rho\ge \rho_0$, which implies
\[
\int_{0}^\infty r'(\rho)\,d\rho \;\ge\; \varepsilon \int_{\rho_0}^\infty \frac{d\rho}{\sqrt{\rho}} \;=\; \infty,
\]
a contradiction. Setting $t_k\coloneqq\sqrt{\rho_k}$ yields $t_k\to\infty$ and $t_k r'(t_k^2)\to 0$. Along $u(t_k)=t_k v$ we therefore have $\|\nabla f(u(t_k))\|\to 0$ while $f(u(t_k))-f^\star \to 1$. Consequently, no $\mu>0$ can satisfy a global PL inequality $\|\nabla f(u)\|^2 \ge 2\mu\,(f(u)-f^\star)$ for all $u\in\mathbb R^n$.
\end{proof}

Given the absence of the global PL condition, we provide convergence guarantees for both smooth and nonsmooth regimes.

\Convergence*

\begin{proof}
We differentiate between the smooth and non-smooth regimes based on the properties of the data distribution and the resulting regularity of the composite objective.

\paragraph{Regime I: Smooth Optimization.} Assume the data distribution $\mathbb{P}$ is absolutely continuous. As established in Theorem~\ref{thm:differentiability}, the soft-radial projection $p$ is differentiable almost everywhere. Consequently, the event that a pre-activation $u = g_\theta(z)$ lands exactly on the non-differentiable boundary $\partial \mathcal{C}$ has measure zero.

While the pointwise function $f(u) = \ell(p(u))$ is not globally $L$-smooth (due to Jacobian discontinuities at $\partial \mathcal{C}$), the \emph{expected} objective $F(\theta)$ becomes smooth due to the smoothing effect of the integration over an absolutely continuous distribution. However, to provide a tractable bound on the gradients, we analyze the smoothness of $f$ on the dense open set $U = \mathbb{R}^n \setminus \partial \mathcal{C}$ where $p$ is twice differentiable. Assume $\nabla \ell$ is $L_\ell$-Lipschitz on $p(K)$, and on the smooth pieces $U \cap K$, $J_p$ is $L_p$-Lipschitz. Define the constants over these smooth regions:
\[
G_\ell \coloneqq \sup_{x\in p(K)}\|\nabla\ell(x)\|,
\qquad
B_p \coloneqq \sup_{u\in K \setminus \partial \mathcal{C}}\|J_p(u)\|.
\]
For any $u, u'$ lying in the same connected component of $U \cap K$, we apply the chain rule $\nabla f(u) = J_p(u)^\top \nabla \ell(p(u))$. We decouple the variations in the Jacobian and the loss gradient by adding and subtracting the cross-term $J_p(u)^\top \nabla \ell(p(u'))$, and applying the triangle inequality:
\begin{align*}
\|\nabla f(u) - \nabla f(u')\| &= \|J_p(u)^\top \nabla \ell(p(u)) - J_p(u')^\top \nabla \ell(p(u'))\| \\
&= \|J_p(u)^\top \nabla \ell(p(u)) - J_p(u)^\top \nabla \ell(p(u')) + J_p(u)^\top \nabla \ell(p(u')) - J_p(u')^\top \nabla \ell(p(u'))\| \\
&\le \|J_p(u)\| \|\nabla \ell(p(u)) - \nabla \ell(p(u'))\| + \|\nabla \ell(p(u'))\| \|J_p(u) - J_p(u')\| \\
&\le B_p L_\ell \|p(u) - p(u')\| + G_\ell L_p \|u - u'\| \\
&\le (B_p^2 L_\ell + G_\ell L_p) \|u - u'\|.
\end{align*}

Thus, $f$ is $L$-smooth \emph{almost everywhere} with constant $L \le L_\ell B_p^2 + G_\ell L_p$. Under the assumption of absolute continuity of $\mathbb{P}$, the objective $F(\theta) = \mathbb{E}[f(g_\theta(z))]$ inherits differentiability, and the variance of the gradients is bounded by the Lipschitz continuity of $f$. Applying standard Stochastic Gradient Descent (SGD) with step size $\eta_t \propto 1/\sqrt{T}$ guarantees convergence to a stationary point of $F$ \cite{ghadimi2013}:
\[
\min_{0 \le t < T} \mathbb{E}[\|\nabla F(\theta_t)\|^2] = \mathcal{O}\left(\frac{1}{\sqrt{T}}\right).
\]

\paragraph{Regime II: Nonsmooth Optimization.}If the data distribution is discrete or arbitrary, the differentiability of $F(\theta)$ cannot be guaranteed. However, we can establish convergence by relying on the theory of \emph{tame functions} \cite{davis2020stochastic}. This framework covers the vast majority of deep learning architectures and ensures that stochastic subgradient methods converge effectively, provided the objective satisfies two structural properties:

\begin{enumerate} 
\item \textbf{Lipschitz Continuity:} As a consequence of Lemma~\ref{lem:ray-regularity}, the soft-radial projection $p$ is locally Lipschitz. Since the loss $\ell$ and the network $g_\theta$ (\textit{e.g.}, with ReLU activations) are also locally Lipschitz, their composition $F(\theta)$ inherits this property. This guarantees that the Clarke subdifferential $\partial F(\theta)$ is non-empty and uniformly bounded on compact sets.
\item \textbf{Tame Objective:} The convergence results of \cite{davis2020stochastic} apply to the class of \emph{tame} functions. This class encompasses functions constructed from finite compositions of piecewise-polynomials and algebraic operations. Since standard activations (like ReLU) are piecewise-polynomial, and our soft-radial projection is constructed from norms and algebraic radial mappings (assuming a semi-algebraic constraint set $\mathcal{C}$, such as a polytope or ball), the composite objective satisfies this property. This guarantees that the loss landscape is geometrically well-behaved, preventing pathological infinite oscillations.\end{enumerate}\citet[Thm.~3.2]{davis2020stochastic} proved that for any locally Lipschitz, tame function, the Stochastic Subgradient Method with diminishing step sizes $\eta_t \to 0$ (and $\sum \eta_t = \infty$, $\sum \eta_t^2 < \infty$) converges almost surely to the set of Clarke stationary points:$$\lim_{T \to \infty} \mathbb{E}\left[ \min_{v \in \partial F(\theta_T)} \|v\| \right] = 0.$$This result holds even if the iterates traverse the non-differentiable boundary $\partial \mathcal{C}$, provided the standard bounded variance assumption on the subgradients is met. In our case, this is guaranteed by the local Lipschitz property of the objective on the compact set $K$.
\end{proof}

\subsection{Proofs of Section \ref{sec:const-ua}}
\label{app:proofs-ua}

Finally, we prove that the constrained model inherits the universal approximation capabilities of the underlying unconstrained architecture.

\ConstUnvAppr*

\begin{proof} Without loss of generality, we assume $0 \in \operatorname{Int}(\mathcal{C})$ serves as the anchor point (see Coordinate Convention in Section~\ref{sec:soft-projection}). Throughout this proof, we denote the uniform norm of a function $\psi: \mathcal{Z} \to \mathbb{R}^n$ by $\|\psi\|_\infty \coloneqq \sup_{z \in \mathcal{Z}} \|\psi(z)\|$. We separate the interior and boundary cases.

\emph{Interior case.} Assume the target image satisfies $h(\mathcal{Z}) \subset \operatorname{Int}(\mathcal{C})$. Since $p: \mathbb{R}^n \to \operatorname{Int}(\mathcal{C})$ is a homeomorphism (Theorem~\ref{thm:homeomorphism}), the function $\phi \coloneqq p^{-1} \circ h: \mathcal{Z} \to \mathbb{R}^n$ is continuous. Since $\mathcal{Z}$ is compact, the image $\phi(\mathcal{Z})$ is compact. Fix any $\delta_0 > 0$ (\textit{e.g.}, $\delta_0 = 1$) and define the compact $\delta_0$-thickening
\[K \coloneqq \bigl\{ \xi \in \mathbb{R}^n \mid \operatorname{dist}\bigl(\xi, \phi(\mathcal{Z})\bigr) \le \delta_0 \bigr\},\]
where $\operatorname{dist}(\xi, A) \coloneqq \inf_{a \in A} \|\xi - a\|$. By the local Lipschitz continuity of $p$ on the compact set $K$, there exists a constant $L_K < \infty$ such that $\|p(a) - p(b)\| \le L_K \|a - b\|$ for all $a, b \in K$. By the universality of $\mathcal{G}$, there exists a function $g \in \mathcal{G}$ such that
\[\|g - \phi\|_\infty \;\le\; \delta,
\qquad
\delta \coloneqq \min\Bigl\{\delta_0,\ \frac{\varepsilon}{L_K}\Bigr\}.\]
This implies $g(\mathcal{Z}) \subset K$. Consequently, for all $z \in \mathcal{Z}$:
\begin{align*}
\|p(g(z)) - h(z)\|&= \|p(g(z)) - p(\phi(z))\| \\
&\le L_K \, \|g(z) - \phi(z)\| \\
&\le L_K \, \delta \;\le\; \varepsilon,
\end{align*}
which implies $\|p \circ g - h\|_\infty \le \varepsilon$, proving the claim for the interior case.

\emph{Boundary case.} For a general target $h: \mathcal{Z} \to \mathcal{C}$, we define a strictly interior approximation $h_\varepsilon$ by shrinking $h$ towards the anchor. Define the shrink operator $S_\tau(x) \coloneqq (1 - \tau)x$. Since $\mathcal{C}$ is convex and $0 \in \operatorname{Int}(\mathcal{C})$, we have $S_\tau(\mathcal{C}) \subset \operatorname{Int}(\mathcal{C})$ for any $\tau \in (0,1)$.Let $D \coloneqq \|h\|_\infty < \infty$. If $D = 0$, then $h \equiv 0$, and the interior case applies directly. Otherwise, we select a shrinkage factor $\tau$ sufficiently small to ensure $h_\varepsilon$ remains within $\varepsilon/2$ of $h$. Specifically, set:
\[
\tau \coloneqq \min\Bigl\{\tfrac{1}{2},\ \frac{\varepsilon}{2D}\Bigr\} \in (0, 1),
\qquad
h_\varepsilon \coloneqq S_\tau \circ h.
\]
The uniform distance between the original and shrunken target is bounded by$$\|h_\varepsilon - h\|_\infty \;=\; \sup_{z \in \mathcal{Z}} \|(1 - \tau)h(z) - h(z)\| \;=\; \tau D \;\le\; \frac{\varepsilon}{2}.$$Since the new target satisfies $h_\varepsilon(\mathcal{Z}) \subset \operatorname{Int}(\mathcal{C})$, we can invoke the \emph{Interior case} result with precision $\varepsilon/2$. This guarantees the existence of a function $g \in \mathcal{G}$ such that
\[\|p \circ g - h_\varepsilon\|_\infty \;\le\; \frac{\varepsilon}{2}.\]
Finally, applying the triangle inequality yields the total error bound
\[\|p \circ g - h\|_\infty
\;\le\;
\underbrace{\|p \circ g - h_\varepsilon\|_\infty}_{\le \varepsilon/2} \;+\; \underbrace{\|h_\varepsilon - h\|_\infty}_{\le \varepsilon/2}
\;\le\; \varepsilon.\]
\end{proof}

\section{Extended Numerical Results}
\label{app:extended-numerical-results}

In this section, we first provide additional insights into the model parameters, specifically how the radial contraction function $r(\cdot)$ and the scale parameter $\lambda$ influence the effective optimization landscape. Next, we provide implementation details for the competitor algorithms, DC3 \citep{dontidc3} and HardNet \citep{min2024hardnet}, which were adjusted to accommodate the (capped) simplex constraints considered in our experiments. We then detail the architectures and hyperparameters utilized within our study. Finally, we conclude with a comprehensive description of our experimental setups.

\subsection{Sensitivity Analysis: Radial Contraction and Scaling}
\label{subsec:radial-contraction}

In Sec.~\ref{subsec:implementation-details} we introduced radial contraction functions $r(\cdot)$ satisfying the properties of Assumption~\ref{ass:radial}. The \emph{rational} radial contraction decays polynomially ($\mathcal{O}(\rho^{-1})$), whereas the \emph{exponential} and \emph{hyperbolic} forms exhibit exponential decay. Consequently, the rational link preserves larger gradient magnitudes for predictions falling far outside the feasible set ($\rho \gg 1$), which helps mitigate the vanishing gradient problem for extreme outliers.

Similar to the problem illustrated in Fig.~\ref{fig:projection}, we evaluate the role of $r(\cdot)$ in terms of warping the effective loss landscape (A), influencing the training loss (B), and determining the convergence to the target $x^*$. Figures~\ref{fig:soft-radial-proj-frac}, \ref{fig:soft-radial-proj-exp}, and \ref{fig:soft-radial-proj-tanh} visualize the effective loss landscape for the \emph{rational} \eqref{eq:rational}, \emph{exponential} \eqref{eq:exponential}, and \emph{hyperbolic} \eqref{eq:hyperbolic} functions, respectively. Furthermore, we illustrate in Fig.~\ref{fig:soft-radial-proj-epsilon} how the minimum scaling threshold $\varepsilon$ modifies the landscape geometry surrounding the chosen anchor $u_0$.

\begin{figure*}[!htbp]
    \centering
    \begin{subfigure}[b]{0.32\textwidth}
        \centering
        \includegraphics[width=\textwidth]{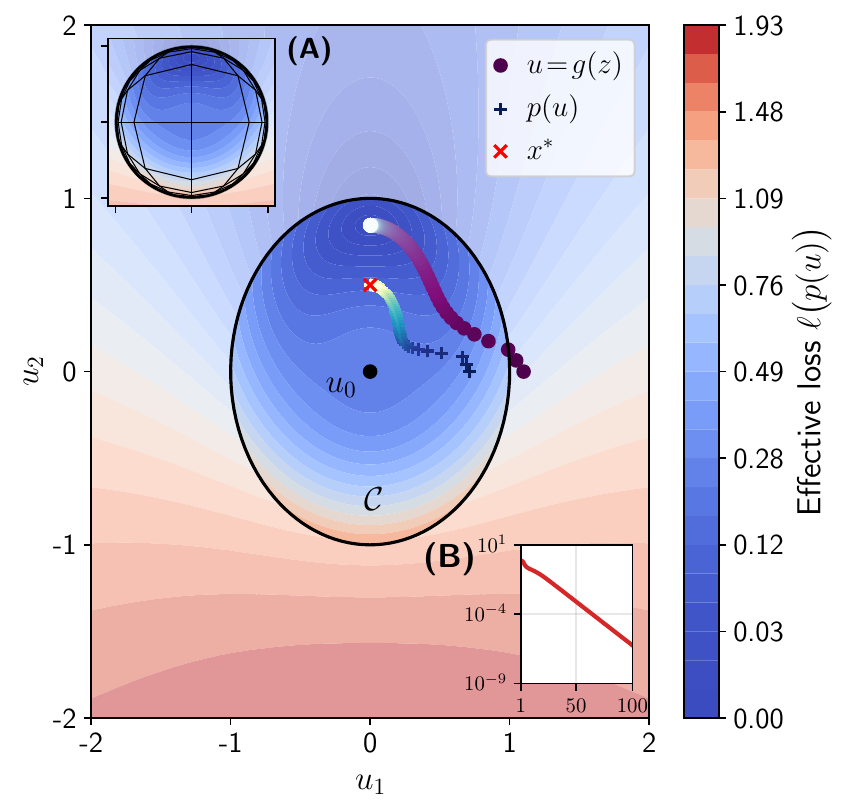}
        \caption{$\lambda=0.5$.}
        \label{fig:soft-radial-proj-layer-frac-lambda-0p5}
    \end{subfigure}
    \begin{subfigure}[b]{0.32\textwidth}
        \centering
        \includegraphics[width=\textwidth]{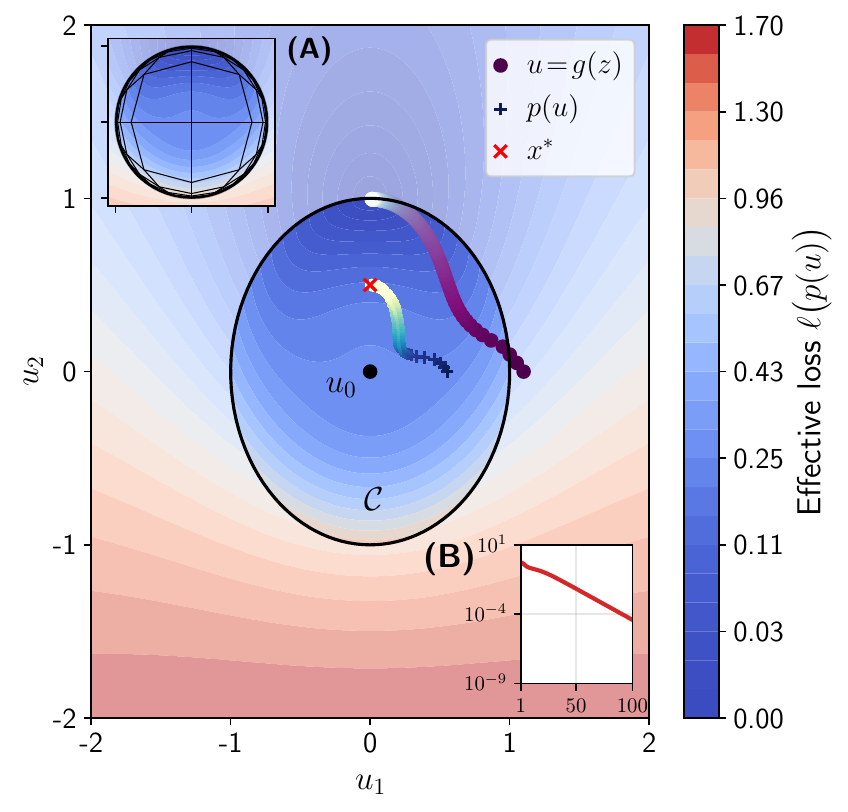}
        \caption{$\lambda=1.0$.}
        \label{fig:soft-radial-proj-layer-frac-lambda-1p0}
    \end{subfigure}
    \begin{subfigure}[b]{0.32\textwidth}
        \centering
        \includegraphics[width=\textwidth]{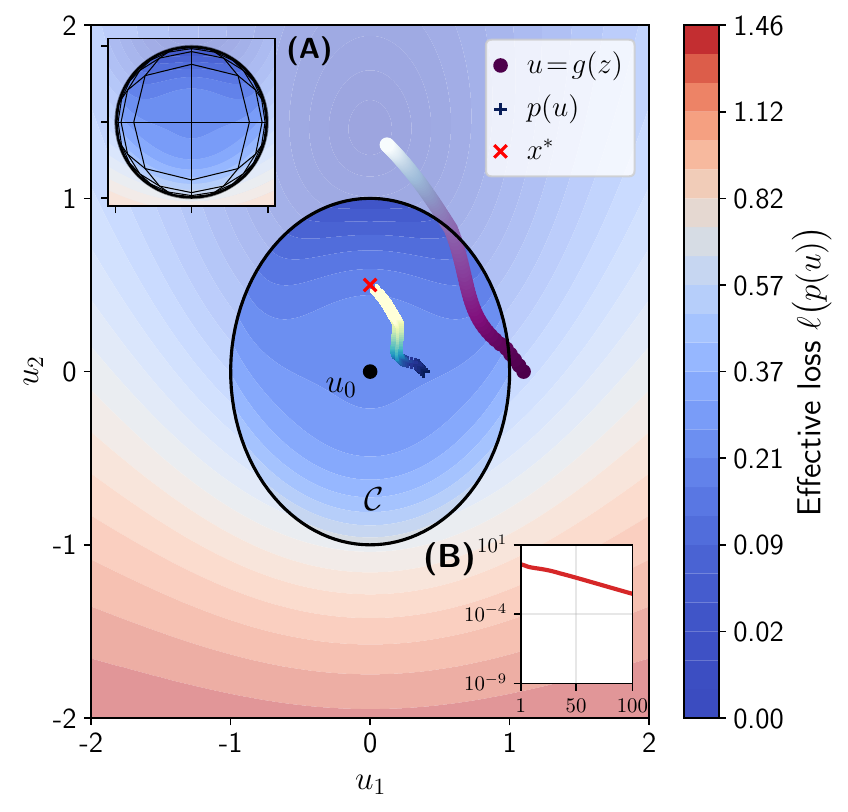}
        \caption{$\lambda=2.0$}
        \label{fig:soft-radial-proj-layer-frac-lambda-2p0}
    \end{subfigure}
    \caption{Soft-radial projection $p(u)$ with \emph{rational} radial contraction \eqref{eq:rational}.}
    \label{fig:soft-radial-proj-frac}
\end{figure*}

\begin{figure*}[!htbp]
    \centering
    \begin{subfigure}[b]{0.32\textwidth}
        \centering
        \includegraphics[width=\textwidth]{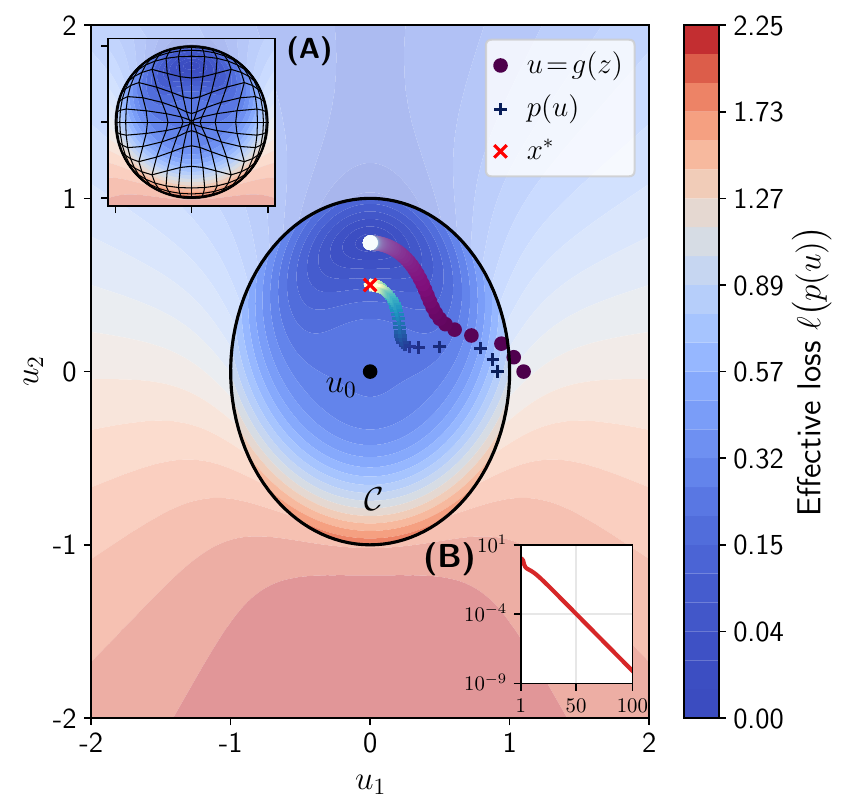}
        \caption{$\lambda=0.5$.}
        \label{fig:soft-radial-proj-layer-exp-lambda-0p5}
    \end{subfigure}
    \begin{subfigure}[b]{0.32\textwidth}
        \centering
        \includegraphics[width=\textwidth]{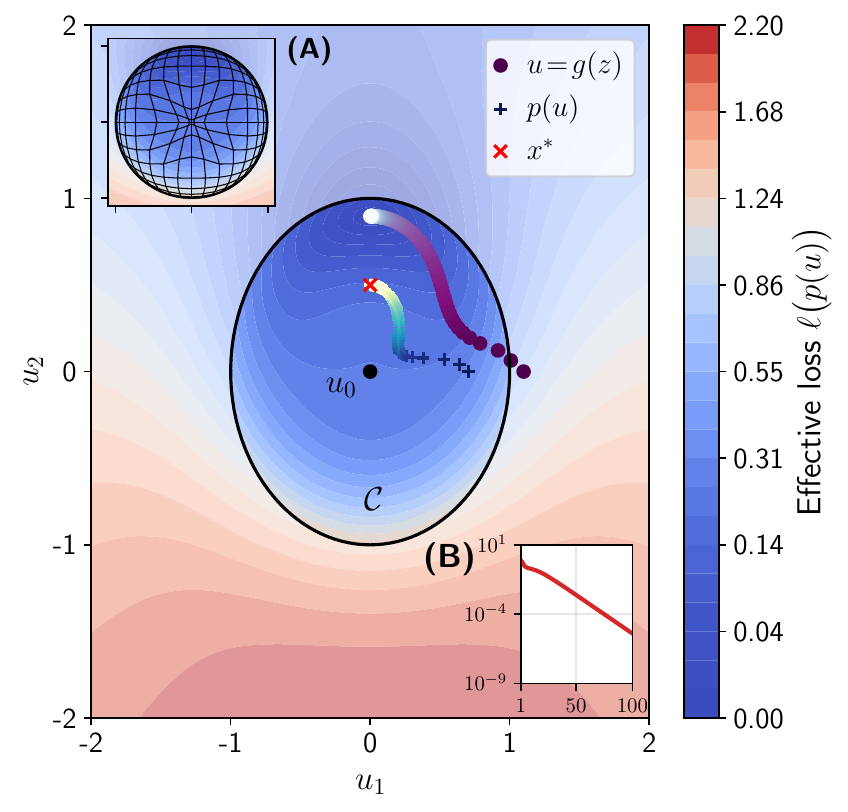}
        \caption{$\lambda=1.0$.}
        \label{fig:soft-radial-proj-layer-exp-lambda-1p0}
    \end{subfigure}
    \begin{subfigure}[b]{0.32\textwidth}
        \centering
        \includegraphics[width=\textwidth]{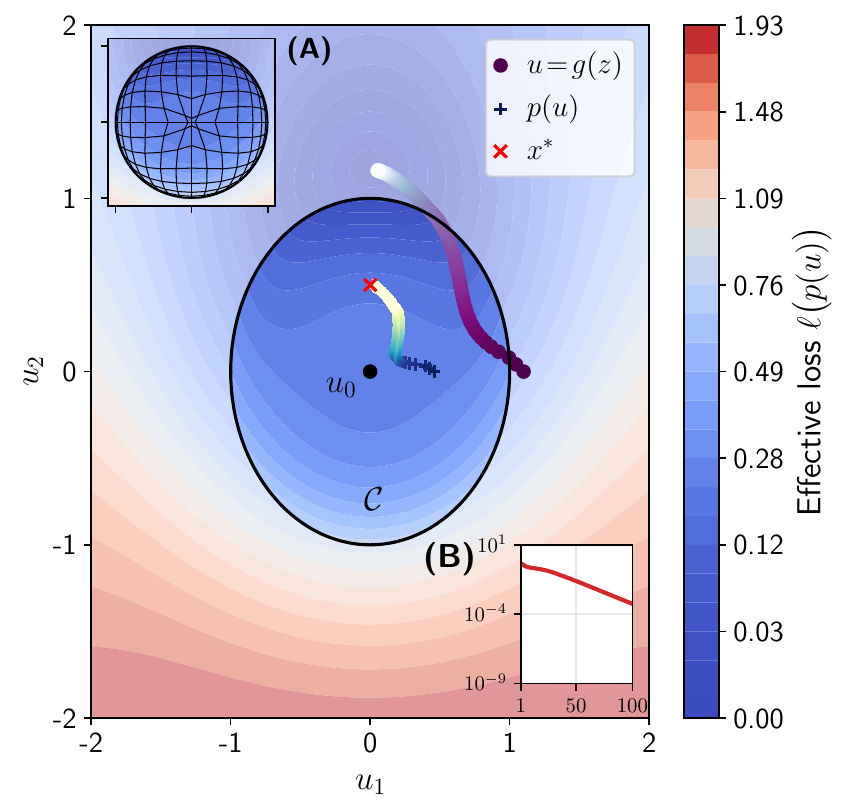}
        \caption{$\lambda=2.0$.}
        \label{fig:soft-radial-proj-layer-exp-lambda-2p0}
    \end{subfigure}
    \caption{Soft-radial projection $p(u)$ with \emph{exponential} radial contraction \eqref{eq:exponential}.}
    \label{fig:soft-radial-proj-exp}
\end{figure*}

\begin{figure*}[!htbp]
    \centering
    \begin{subfigure}[b]{0.32\textwidth}
        \centering
        \includegraphics[width=\textwidth]{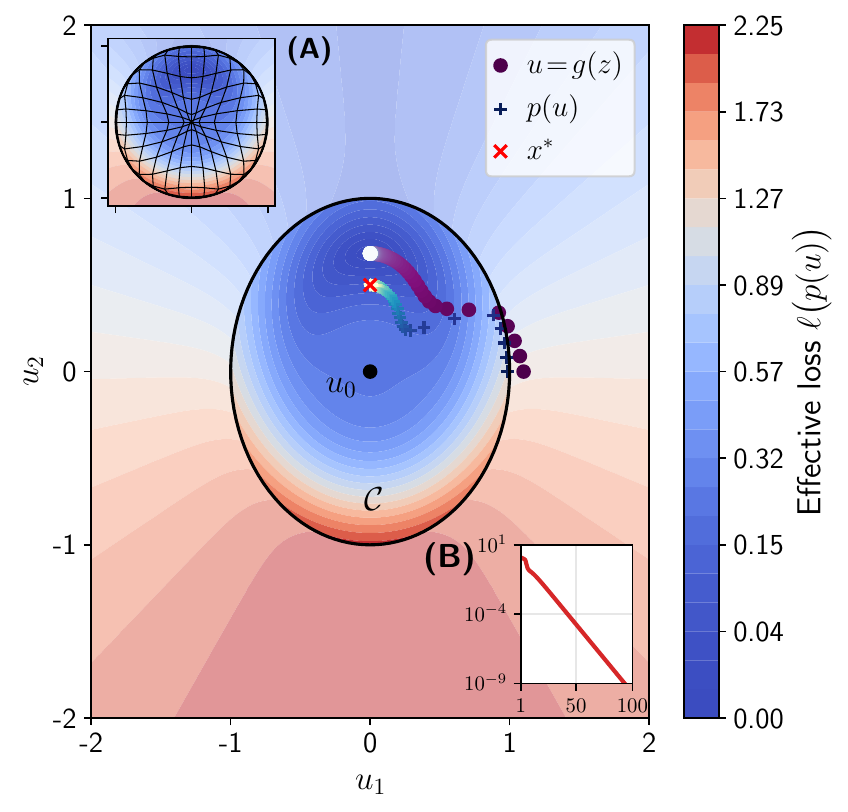}
        \caption{$\lambda=0.5$.}
        \label{fig:soft-radial-proj-layer-tanh-lambda-0p5}
    \end{subfigure}
    \begin{subfigure}[b]{0.32\textwidth}
        \centering
        \includegraphics[width=\textwidth]{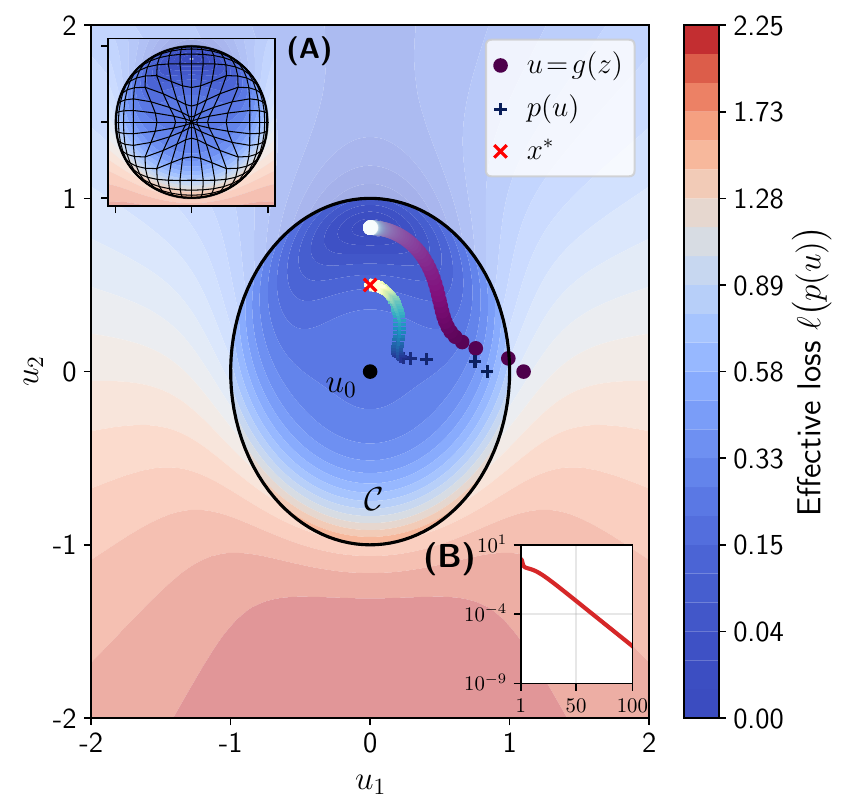}
        \caption{$\lambda=1.0$.}
        \label{fig:soft-radial-proj-layer-tanh-lambda-1p0}
    \end{subfigure}
    \begin{subfigure}[b]{0.32\textwidth}
        \centering
        \includegraphics[width=\textwidth]{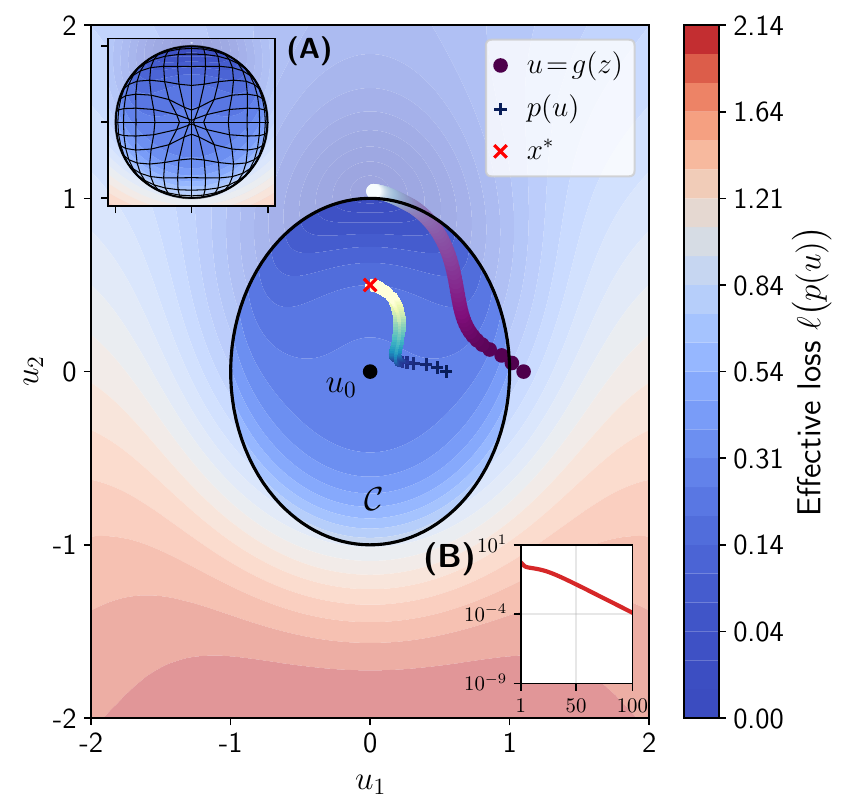}
        \caption{$\lambda=2.0$.}
        \label{fig:soft-radial-proj-layer-tanh-lambda-2p0}
    \end{subfigure}
    \caption{Soft-radial projection $p(u)$ with \emph{hyperbolic} radial contraction \eqref{eq:hyperbolic}.}
    \label{fig:soft-radial-proj-tanh}
\end{figure*}

\begin{figure*}[!htbp]
    \centering
    \begin{subfigure}[b]{0.32\textwidth}
        \centering
        \includegraphics[width=\textwidth]{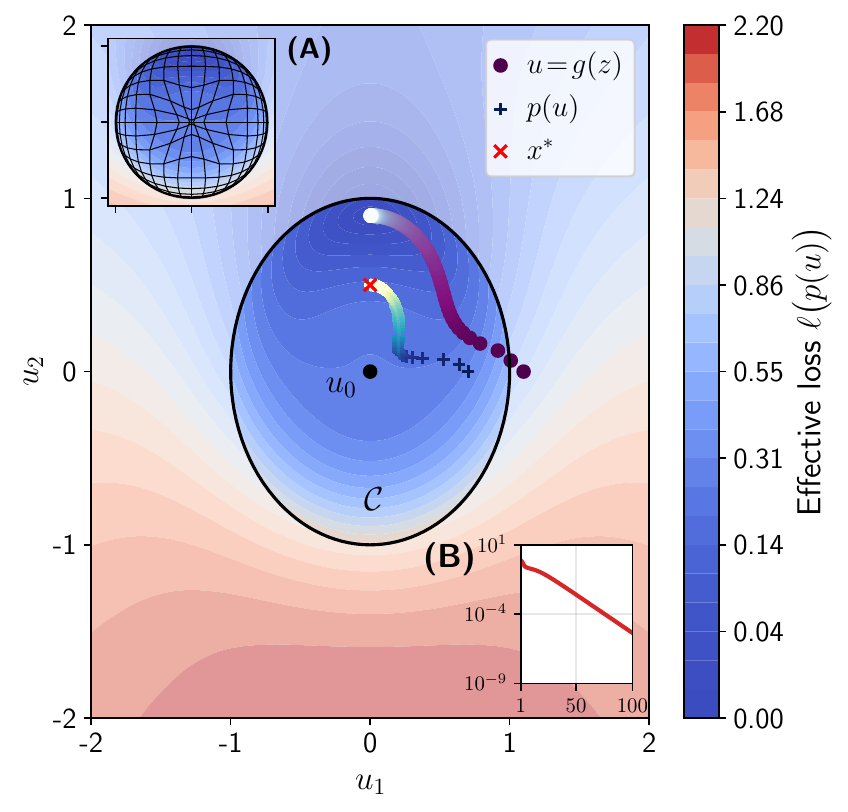}
        \caption{$\varepsilon=0.001$.}
        \label{fig:soft-radial-proj-epsilon-0p001}
    \end{subfigure}
    \begin{subfigure}[b]{0.32\textwidth}
        \centering
        \includegraphics[width=\textwidth]{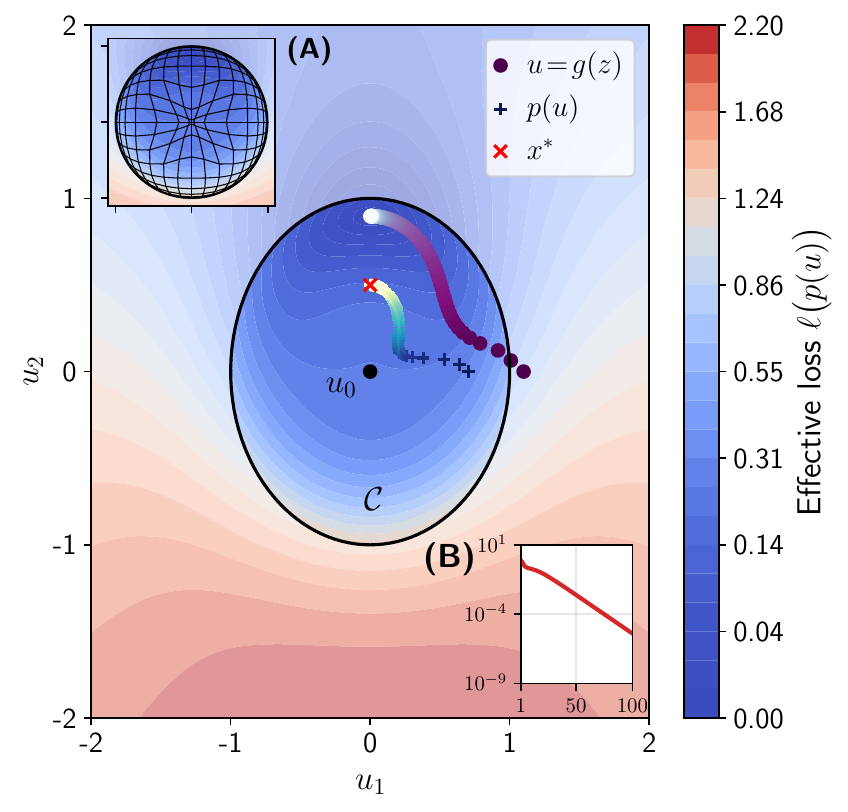}
        \caption{$\varepsilon=0.01$.}
        \label{fig:soft-radial-proj-epsilon-0p01}
    \end{subfigure}
    \begin{subfigure}[b]{0.32\textwidth}
        \centering
        \includegraphics[width=\textwidth]{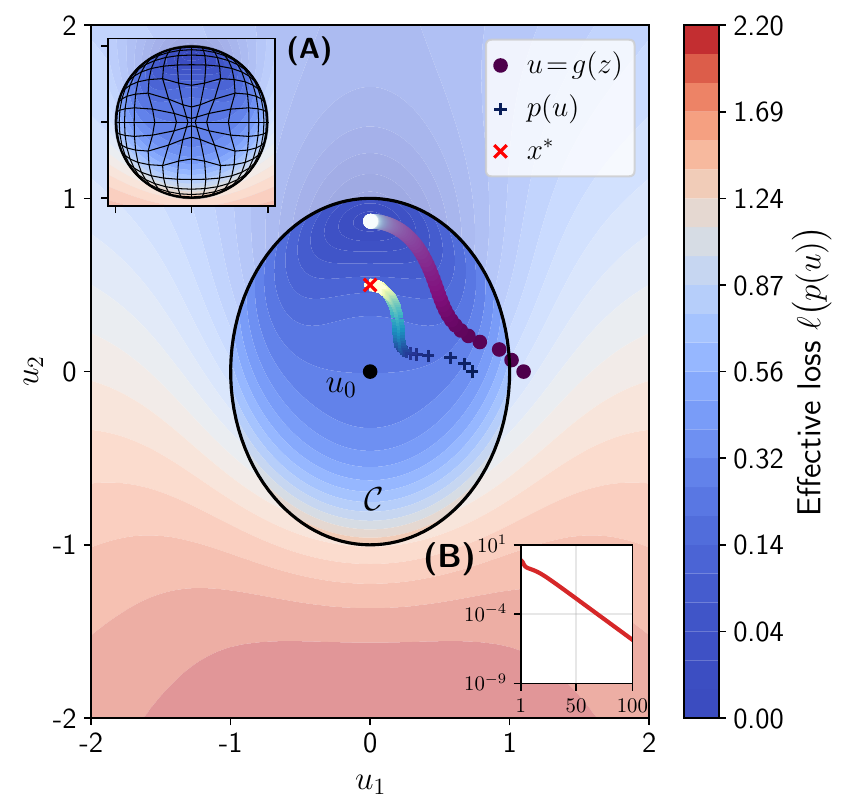}
        \caption{$\varepsilon=0.1$.}
        \label{fig:soft-radial-proj-epsilon-0p1}
    \end{subfigure}
    \caption{Soft-radial projection $p(u)$ for varying minimum scaling threshold $\varepsilon$.}
    \label{fig:soft-radial-proj-epsilon}
\end{figure*}

\newpage
\subsection{Adaptation of Competitor Methods to Capped Simplex Constraints}
\label{app:competitor-methods}

We explain the adaptation of the competitor methods with respect to the capped simplex introduced in the portfolio setting and formally defined as
\[
\mathcal C
\;=\;
\Bigl\{w\in\mathbb R^N \ \big|\ 0\leq  w,\ \mathds{1}^\top w = 1,\  w \leq c\Bigr\},
\]
where $w$ is the portfolio weight and $c$ is a cap on the weight.

\textbf{HardNet.} The HardNet framework \cite{min2024hardnet} enforces feasibility by appending a differentiable projection layer to an unconstrained predictor $g_\theta:\mathcal Z\to\mathbb R^N$. Given an input $z\in\mathcal Z$ and raw output $u \coloneqq g_\theta(z)$, the method considers input-dependent affine constraints of the form
\[
\underline{b}(z) \leq A(z) w \leq \bar{b}(z).
\]
\citet{min2024hardnet} derive a closed-form projection for this case, denoted \textit{HardNet-Aff}:
\begin{equation}
\label{eq:hardnet-aff}
P_{\text{HN}}(u;z)
\;=\;
u
\;+\;
A^\dagger_\mathrm{R}(z)\Big(
\operatorname{ReLU}\big(\underline{b}(z)-A(z)u\big)
-
\operatorname{ReLU}\big(A(z)u- \bar{b}(z)\big)
\Big),
\end{equation}
where $A^\dagger_\mathrm{R}(z) \coloneqq A(z)^\top\!\big(A(z)A(z)^\top\big)^{-1}$ is the Moore--Penrose \emph{right} pseudoinverse. This closed form assumes that $A(z)$ has full row rank (Assumption~5 in \cite{min2024hardnet}), describing an underdetermined system.
In our portfolio setting, the feasible set is a capped simplex; stacking the equality constraint with the $2N$ bound constraints yields an \emph{overdetermined} affine system. Consequently, the matrix $A A^\top$ is singular and the right-pseudoinverse expression in~\eqref{eq:hardnet-aff} is not directly applicable. We therefore use a least-squares relaxation of the HardNet correction. This involves defining a (signed) violation vector 
\begin{equation}
\label{eq:hardnet-viol}
v(u;z)
\;\coloneqq\;
\operatorname{ReLU}\big( \underline{b}(z)-A u\big)
-
\operatorname{ReLU}\big(A u-\bar{b}(z)\big).
\end{equation}
Since the resulting static constraint matrix $A$ has full column rank in our construction, we compute a least-squares correction direction via the \emph{left} pseudoinverse
\begin{equation}
\label{eq:hardnet-lsq}
\Delta(u;z)
\;\coloneqq\;
\arg\min_{\Delta\in\mathbb R^N}\ \|A\Delta - v(u;z)\|^2
\;=\;
A^\dagger_{\mathrm{L}}\,v(u;z),
\qquad
A^\dagger_{\mathrm{L}}\coloneqq (A^\top A)^{-1}A^\top.
\end{equation}
We then set $\tilde{w}\coloneqq u+\Delta(u;z)$. A conceptually broader but computationally slower alternative proposed by the authors for general convex constraints is to embed the projection as a differentiable convex optimization layer \cite{agrawal2019differentiable}, referred to as \textit{HardNet-Cvx}. To remove any remaining numerical residuals during evaluation, we additionally apply an exact orthogonal projection onto the capped simplex, $w = P(\tilde{w})$, to guarantee $w \in \mathcal C$.

\paragraph{Symmetric DC3 for the capped simplex.}
We adapt the Deep Constraint Completion and Correction (DC3) framework \citep{dontidc3} to $\mathcal C$. Standard DC3 selects $N-1$ independent variables and defines the remaining coordinate as dependent via the equality constraint. Applied directly to the raw network output $u=g_\theta(z)$, this introduces an asymmetry: the dependent coordinate absorbs the entire sum deviation, often producing unstable magnitudes and gradients. To mitigate this, we employ a \emph{symmetric} initialization that first distributes the sum deviation evenly by orthogonally projecting $u$ onto the simplex hyperplane
\(
\mathcal H \coloneqq \{ w\in\mathbb R^N:\mathbf 1^\top  w=1\}.
\)
Concretely, we set
\begin{equation}
\label{eq:dc3-init}
w^{(0)}
\;\coloneqq\;
P_{\mathcal H}(u)
\;=\;
u-\frac{1}{N}\big(\mathds{1}^\top u - 1\big)\mathds{1}
\;\in\;
\mathcal H .
\end{equation}
We then take the first $N-1$ coordinates as free variables
\(
\xi\coloneqq (w^{(0)}_1,\ldots,w^{(0)}_{N-1})^\top\in\mathbb R^{N-1}
\)
and enforce the equality constraint through a completion map
\begin{equation}
\label{eq:dc3-complete}
w(\xi)
\;\coloneqq\;
\begin{bmatrix}
\xi \\
1-\mathds{1}^\top \xi
\end{bmatrix},
\end{equation}
which ensures $\mathds{1}^\top w(\xi) = 1$ for all $\xi$.
To correct violations of the bound constraints $ 0\leq w \leq c$, we minimize a squared hinge-residual energy
\begin{equation}
\label{eq:dc3-energy}
\mathcal V(\xi)
\;\coloneqq\;
\sum_{i=1}^N
\Big(
\operatorname{ReLU}\big(-w_i(\xi)\big)^2
+
\operatorname{ReLU}\big(w_i(\xi)-c_i\big)^2
\Big),
\end{equation}
via an unrolled gradient descent loop on the free variables
\begin{equation}
\label{eq:dc3-update}
\xi^{(t+1)}
\;=\;
\xi^{(t)} - \eta\,\nabla_{\xi}\mathcal V\big(\xi^{(t)}\big),
\qquad t=0,\ldots,L-1.
\end{equation}
The final output is given by $w^{(L)}= w(\xi^{(L)})$. Gradients are backpropagated through the completion map~\eqref{eq:dc3-complete}, properly accounting for the dependence of the dependent coordinate on the free variables. Similar to the HardNet implementation, we apply an orthogonal projection during evaluation to ensure strict feasibility.

\subsection{Hyperparameter Tuning}

Table \ref{tab:hyperparams} summarizes the hyperparameters used in our experiments, distinguishing between the portfolio optimization and ride-sharing dispatch task. For both domains, we perform a grid search to select the optimal configuration based on validation performance. As noted earlier, for HardNet and DC3, we apply a final orthogonal projection to the candidate solution during evaluation only. This ensures that the evaluation metrics are not penalized by minor infeasibilities caused by numerical rounding errors. Crucially, we omit this strict projection step during training to preserve gradient flow and prevent the deterioration of learning signals.

\begin{table}[h]
    \centering
    \small
    \caption{Hyperparameter settings for portfolio optimization and ride-sharing dispatch experiments.}
    \label{tab:hyperparams}
    \begin{tabular}{l c c}
        \toprule
        \textbf{Parameter} & \textbf{Portfolio Optimization} & \textbf{Ride-Sharing Dispatch} \\
        \midrule
        \multicolumn{3}{l}{\textit{Architecture \& Training}} \\
        Base Architecture & LSTM & MLP \\
        Hidden Units & \multicolumn{2}{c}{$\{32, 64\}$} \\
        Dropout Rate & \multicolumn{2}{c}{$\{0.1, 0.2\}$} \\
        Optimizer & \multicolumn{2}{c}{Adam} \\
        Learning Rate & \multicolumn{2}{c}{$\{1 \times 10^{-4}, 5 \times 10^{-4}\}$} \\
        Lookback Horizon ($H$) & $\{10, 30\}$ & $24$ \\
        Batch Size & $64$ & $128$ \\
        Training Epochs & $50$ & $100$ \\
        \midrule
        \multicolumn{3}{l}{\textit{Projection Layer Hyperparameters}} \\
        Softmax Temperature ($\tau$) & \multicolumn{2}{c}{$\{0.1, 0.5, 1.0, 2.0\}$} \\
        Soft-radial $\lambda$ & \multicolumn{2}{c}{$\{0.5, 1.0, 2.0, 5.0, 10.0\}$} \\
        Soft-radial $r(\cdot)$ & \multicolumn{2}{c}{\{Fractional, Exponential, Hyperbolic\}} \\
        HardNet Steps & \multicolumn{2}{c}{$\{1, 3, 5\}$} \\
        DC3 Steps & \multicolumn{2}{c}{$\{1, 3, 5\}$} \\
        DC3 Learning Rate & \multicolumn{2}{c}{$\{10^{-2}, 10^{-1}\}$} \\
        DC3 Momentum & \multicolumn{2}{c}{$\{0.5, 0.9\}$} \\
        \bottomrule
    \end{tabular}
\end{table}

\subsection{Extended Analysis: Portfolio Optimization}
\label{subsec:extended-analysis-portfolio-optimization}

\subsubsection{Discussion: Deep Portfolio Optimization vs. Classical Approaches}
\label{subsubsec:discussion}

The formulation presented in Section~\ref{subsec:portfolio-optimization} represents a \textit{deep portfolio optimization} problem \cite{zhang2020deep}, which fundamentally differs from the classical mean-variance optimization framework \cite{markowitz1952}.

\textbf{Predict-then-Optimize vs.~End-to-End Learning.}
Classical mean-variance optimization typically operates in a ``predict-then-optimize'' framework: one first estimates the expected return vector $\mu\in\mathbb{R}^N$ and the covariance matrix $\Sigma\in\mathbb{S}^N_+$, and then solves a quadratic program to maximize $w^\top \mu - \zeta w^\top \Sigma w$. This two-step process suffers from two major limitations. First, it separates the prediction error from the optimization cost; a small error in estimating $\Sigma$ can lead to drastically suboptimal portfolios (a phenomenon formally known as the ``error maximization'' property of Markowitz \cite{michaud1989}). Second, while standard sample estimators neglect temporal dependencies (implicitly assuming i.i.d.~returns), more advanced dynamic estimators (\textit{e.g.}, generalized autoregressive conditional heteroskedasticity (GARCH) model) often impose rigid parametric structures that fail to capture the complex, non-linear interactions between assets that recurrent neural network architectures in the form of $g_\theta$ are designed to exploit.

\textbf{Parameter Sensitivity and Convexity.}
A significant hurdle in classical frameworks is the sensitivity to the risk-aversion parameter $\zeta$, which requires frequent recalibration. Our formulation bypasses this by directly maximizing the Sharpe ratio---a scale-invariant metric that eliminates the need for a subjective risk-return trade-off coefficient. While the static Sharpe ratio maximization is a quasiconcave problem that can be reformulated as a Second-Order Cone Program (SOCP) for global convergence, parameterizing the weights via a neural network $g_\theta$ introduces non-convexity. We accept this trade-off to leverage the representational power of deep learning; the model can ingest complex state histories to generate non-linear weight mappings, far surpassing the constraints of the linear-quadratic assumptions in traditional finance.

\textbf{Transaction Cost as Temporal Regularization.} Finally, we address the cost of trading. Classical frameworks often use static $L_2$ regularization to prevent overfitting or $L_1$ regularization to enforce sparsity \cite{brodie2009sparse}. However, these spatial constraints fail to penalize policy volatility: a network can satisfy strict sparsity constraints while still oscillating wildly between assets over time.

To enforce temporal consistency, we explicitly model transaction costs via the term $\frac{\gamma}{2}\|w_t - w_{t-1}^+\|_1$. Unlike static penalties, this targets the velocity of the portfolio turnover. Crucially, costs are computed relative to the \textit{drifted} portfolio $w_{t-1}^+$, ensuring that passive price movements do not incur penalties. Let $w_{t-1}$ be the weights at $t-1$. Over the interval $(t-1, t]$, price movements $y_t$ cause these weights to evolve naturally into the drifted weights $w_{t-1}^{+} = (\text{diag}(y_t) \, w_{t-1})/(y_t^\top w_{t-1})$. The numerator captures asset-specific growth, while the denominator renormalizes the sum.

\begin{remark}[Cost normalization] The factor $1/2$ normalizes the $L_1$ distance to represent \textit{one-way turnover} (the fraction of portfolio value actively replaced). Thus, $\gamma$ represents the effective cost per unit of turnover.
\end{remark}

This formulation has a profound effect on the learning dynamics. By including this cost directly in the objective, the gradient signal penalizes high-frequency control noise. The policy learns a valid economic trade-off: it naturally creates a \emph{hysteresis effect}, where the portfolio weights exhibit inertia against small market fluctuations. This recovers the classical \emph{no-trade zone} behavior predicted by \citet{davis1990portfolio}: rebalancing is only performed if the expected Sharpe ratio gain strictly outweighs the explicit cost of the action.

\subsubsection{Descriptive Statistics}
\label{subsubsec:descriptive-statistics}

In Table \ref{tab:summary_stats}, we provide descriptive statistics for the asset universes included in our analysis. We evaluate performance across three distinct portfolios, each representing a different difficulty level and correlation structure:
\begin{itemize}[leftmargin=*, nosep]
    \item \textit{Global:} The Global portfolio aggregates a diverse set of multi-asset class indices covering global equities, treasury bonds, and commodities. It represents a macro-allocation problem characterized by lower average correlation.
    \item \textit{Sectors:} The Sectors dataset comprises the 11 primary Global Industry Classification Standard (GICS) sectors. It serves as a standard industry benchmark with high inter-asset correlation, effectively representing the broader US economy.
    \item \textit{Liquid:} The Liquid collection contains a selection of 50 high-volume individual stocks. This universe captures the higher volatility and idiosyncratic risk associated with individual stock picking.
\end{itemize}

Returns are derived from Yahoo Finance adjusted closing prices to account for corporate actions, such as splits and dividends, and are partitioned chronologically to strictly prevent look-ahead bias.\footnote{We note that survivorship bias is not compensated for within our analysis, and assets with missing data are omitted from the study.} The training set spans from 2010/01/01 to 2018/12/31, covering the post-financial-crisis recovery. The validation set, used for hyperparameter tuning, covers 2019/01/01 to 2020/12/31. The test set extends from 2021/01/01 to 2025/12/31.

Table \ref{tab:summary_stats} reports the annualized mean and median returns alongside annualized volatility. To fully characterize the risk profile and market dynamics, we also include the Maximum Drawdown (Max DD), the average pairwise correlation between assets ($\bar{\varrho}$), the lag-1 autocorrelation (AC(1)), and the Kolmogorov-Smirnov $p$-value ($p_{\text{KS}}$). As shown in the table, the test period exhibits significantly higher volatility and maximum drawdown compared to the training period, reflecting more turbulent market conditions. This distribution shift is quantified by $p_\text{KS}$: for the Global, Liquid, and Sectors universes, $p_\text{KS} < 0.05$ in the test set, rejecting the null hypothesis that the test data follows the training distribution. This confirms that our evaluation effectively tests the model's generalization to out-of-distribution market regimes.

\begin{table}[h]
\centering
\small
\caption{Summary statistics. Returns, Volatility, and Drawdown are in percentages. $\bar{\varrho}$ is average correlation. $p_\text{KS}$ tests for regime shift.}
\label{tab:summary_stats}
\begin{tabular}{lcccccccc}
\toprule
Universe & Split & $\mu\,(\%)$ & Median $(\%)$ & $\sigma\,(\%)$ & MaxDD $(\%)$ & $\bar{\varrho}$ & AC(1) & $p_\text{KS}$ \\
\midrule
Global & Train & $7.30$ & $11.84$ & $17.47$ & $-19.37$ & $0.35$ & $-0.037$ &  \\
Global & Val & $7.46$ & $14.22$ & $15.68$ & $-13.87$ & $0.32$ & $-0.020$ & $0.607$ \\
Global & Test & $9.30$ & $13.60$ & $22.34$ & $-28.46$ & $0.46$ & $-0.045$ & $0.011$ \\
\addlinespace
Liquid & Train & $19.12$ & $14.72$ & $24.98$ & $-16.11$ & $0.38$ & $-0.021$ &  \\
Liquid & Val & $19.71$ & $30.76$ & $25.40$ & $-18.11$ & $0.39$ & $-0.029$ & $0.319$ \\
Liquid & Test & $16.95$ & $16.93$ & $31.44$ & $-30.56$ & $0.40$ & $-0.088$ & $0.019$ \\
\addlinespace
Sectors & Train & $13.46$ & $20.17$ & $17.05$ & $-18.61$ & $0.70$ & $-0.037$ &  \\
Sectors & Val & $9.75$ & $26.72$ & $17.15$ & $-18.47$ & $0.58$ & $-0.003$ & $0.826$ \\
Sectors & Test & $13.99$ & $22.01$ & $24.57$ & $-36.91$ & $0.65$ & $-0.107$ & $0.007$ \\
\bottomrule
\end{tabular}
\end{table}

\subsubsection{Implementation Details}
In the portfolio optimization setting, we augment raw asset returns with rolling volatility and rolling correlation to a cross-sectional market proxy (defined as the mean return across the universe). These features are computed over the lookback horizon $H$ and normalized using training-set statistics to strictly prevent look-ahead bias. We acknowledge that achieving state-of-the-art financial performance typically requires a broader set of predictive signals (alphas). However, our primary objective is to evaluate the stability of the learning process under the high noise-to-signal ratio characteristic of financial time series, rather than to maximize the Sharpe ratio through signal engineering.

As discussed in Sec.~\ref{subsec:portfolio-optimization}, the $L_1$ transaction cost term in~\eqref{eq:sharpe_opt} is non-differentiable at zero, posing a challenge for gradient-based optimization. To address this, we replace the exact $L_1$-norm with the Pseudo-Huber loss, a reparameterization of the smooth penalty introduced by~\citet{charbonnier1994}, defined as
\begin{equation}
L_\delta(w_t, w_{t-1}^+) = \sum_{i=1}^N \left( \sqrt{\delta^2 + (w_{i,t} - w_{i,t-1}^+)^2} - \delta \right)
\end{equation}
where $\delta > 0$ controls the steepness. As $\delta \to 0$, this function converges to $\|w_t - w_{t-1}^+\|_1$, preserving theoretical consistency while enabling stable gradient flow.

\subsubsection{Additional Empirical Results}
In Section~\ref{sec:numerical-experiments}, we presented results for the Sharpe ratio maximization formulation defined in \eqref{eq:sharpe_opt}. A common concern in practice is that the explicit inclusion of transaction costs may encourage trivial solutions where the network prioritizes weight stability over active performance optimization. To address this, we provide comparative results for portfolio optimization where training is performed without modeling transaction costs (i.e., omitting the $\frac{\gamma}{2}\|w_t - w_{t-1}^+\|_1$ term). We refer to the formulation excluding these costs from the loss function as \emph{training gross}, and the standard formulation including them as \emph{training net}. Crucially, to ensure a fair comparison, we report the net Sharpe ratio and turnover for both configurations during evaluation. Table~\ref{tab:portfolio-results-train-gross} and Table~\ref{tab:portfolio-results-train-net} present the aggregated results for the \emph{training gross} and \emph{training net} settings, respectively. Overall, we observe that explicitly considering transaction costs during training leads to a decrease in test-time turnover across methods. Consistent with our findings in Section~\ref{sec:numerical-experiments}, our \emph{soft-radial projection} demonstrates superior financial performance and greater robustness to random seed initialization.

\begin{table}[h]
\centering
\small
\caption{Portfolio optimization results for the \emph{training gross} objective (costs omitted from loss), aggregated over five random seeds (mean $\pm$ std).}
\begin{tabular}{lclcc}
\toprule
Universe & Feasible Set & Method & Sharpe Ratio (Net) & Turnover \\
\midrule
\multirow{3}{*}{Global} & \multirow{3}{*}{$\Delta$} & Softmax & $0.03\,(\pm0.29)$ & $0.27\,(\pm0.04)$ \\
& & O-Proj & $0.15\,(\pm0.32)$ & $0.23\,(\pm0.05)$ \\
& & Soft-Radial & $0.55\,(\pm0.04)$ & $0.06\,(\pm0.02)$ \\
\midrule
\multirow{4}{*}{Global} & \multirow{4}{*}{$c=0.15$} & O-Proj & $0.33\,(\pm0.13)$ & $0.15\,(\pm0.01)$ \\
& & DC3 & $0.34\,(\pm0.14)$ & $0.19\,(\pm0.01)$ \\
& & HardNet & $0.43\,(\pm0.13)$ & $0.17\,(\pm0.02)$ \\
& & Soft-Radial & $0.60\,(\pm0.09)$ & $0.06\,(\pm0.02)$ \\
\midrule
\multirow{3}{*}{Liquid} & \multirow{3}{*}{$\Delta$} & Softmax & $0.61\,(\pm0.32)$ & $0.22\,(\pm0.05)$ \\
& & O-Proj & $0.44\,(\pm0.25)$ & $0.44\,(\pm0.02)$ \\
& & Soft-Radial & $0.87\,(\pm0.03)$ & $0.09\,(\pm0.01)$ \\
\midrule
\multirow{4}{*}{Liquid} & \multirow{4}{*}{$c=0.05$} & O-Proj & $0.56\,(\pm0.12)$ & $0.20\,(\pm0.01)$ \\
& & DC3 & $0.49\,(\pm0.18)$ & $0.23\,(\pm0.02)$ \\
& & HardNet & $0.59\,(\pm0.19)$ & $0.20\,(\pm0.01)$ \\
& & Soft-Radial & $0.87\,(\pm0.02)$ & $0.07\,(\pm0.01)$ \\
\midrule
\multirow{3}{*}{Sectors} & \multirow{3}{*}{$\Delta$} & Softmax & $0.39\,(\pm0.16)$ & $0.26\,(\pm0.04)$ \\
& & O-Proj & $0.56\,(\pm0.14)$ & $0.23\,(\pm0.02)$\\
& & Soft-Radial & $0.74\,(\pm0.04)$ & $0.11\,(\pm0.02)$ \\
\midrule
\multirow{4}{*}{Sectors} & \multirow{4}{*}{$c=0.5$} & O-Proj & $0.58\,(\pm0.08)$ & $0.21\,(\pm0.03)$ \\
& & DC3 & $0.55\,(\pm0.07)$ & $0.20\,(\pm0.02)$ \\
& & HardNet & $0.63\,(\pm0.07)$ & $0.20\,(\pm0.03)$ \\
& & Soft-Radial & $0.75\,(\pm0.05)$ & $0.13\,(\pm0.02)$ \\
\midrule
\multirow{4}{*}{Sectors} & \multirow{4}{*}{$c=0.15$} & O-Proj & $0.73\,(\pm0.10)$ & $0.09\,(\pm0.00)$ \\
& & DC3 & $0.72\,(\pm0.02)$ & $0.07\,(\pm0.00)$ \\
& & HardNet & $0.82\,(\pm0.05)$ & $0.07\,(\pm0.01)$ \\
& & Soft-Radial & $0.82\,(\pm0.03)$ & $0.03\,(\pm0.01)$ \\
\bottomrule
\end{tabular}
\label{tab:portfolio-results-train-gross}
\end{table}

\begin{table}[H]
\centering
\small
\caption{Portfolio optimization results for the \emph{training net} objective (costs included in loss), aggregated over five random seeds (mean $\pm$ std).}
\begin{tabular}{lclcc}
\toprule
Universe & Feasible Set & Method & Sharpe Ratio (Net) & Turnover \\
\midrule
\multirow{3}{*}{Global} & \multirow{3}{*}{$\Delta$} & Softmax & $0.21\,(\pm0.29)$ & $0.31\,(\pm0.04)$ \\
& & O-Proj & $0.27\,(\pm0.20)$ & $0.20\,(\pm0.03)$ \\
& & Soft-Radial & $0.68\,(\pm0.09)$ & $0.05\,(\pm0.03)$ \\
\midrule
\multirow{4}{*}{Global} & \multirow{4}{*}{$c=0.15$} & O-Proj & $0.49\,(\pm0.18)$ & $0.17\,(\pm0.02)$ \\
& & DC3 & $0.38\,(\pm0.15)$ & $0.18\,(\pm0.02)$ \\
& & HardNet & $0.44\,(\pm0.13)$ & $0.16\,(\pm0.02)$ \\
& & Soft-Radial & $0.68\,(\pm0.05)$ & $0.05\,(\pm0.01)$ \\
\midrule
\multirow{3}{*}{Liquid} & \multirow{3}{*}{$\Delta$} & Softmax & $0.63\,(\pm0.19)$ & $0.22\,(\pm0.05)$ \\
& & O-Proj & $0.25\,(\pm0.18)$ & $0.48\,(\pm0.02)$ \\
& & Soft-Radial & $0.90\,(\pm0.03)$ & $0.06\,(\pm0.00)$ \\
\midrule
\multirow{4}{*}{Liquid} & \multirow{4}{*}{$c=0.05$} & O-Proj & $0.64\,(\pm0.09)$ & $0.22\,(\pm0.01)$ \\
& & DC3 & $0.63\,(\pm0.08)$ & $0.23\,(\pm0.01)$ \\
& & HardNet & $0.62\,(\pm0.11)$ & $0.19\,(\pm0.01)$ \\
& & Soft-Radial & $0.90\,(\pm0.02)$ & $0.05\,(\pm0.00)$ \\
\midrule
\multirow{3}{*}{Sectors} & \multirow{3}{*}{$\Delta$} & Softmax & $0.43\,(\pm0.24)$ & $0.25\,(\pm0.04)$ \\
& & O-Proj & $0.68\,(\pm0.21)$ & $0.21\,(\pm0.03)$ \\
& & Soft-Radial & $0.76\,(\pm0.04)$ & $0.09\,(\pm0.03)$ \\
\midrule
\multirow{4}{*}{Sectors} & \multirow{4}{*}{$c=0.5$} & O-Proj & $0.73\,(\pm0.23)$ & $0.18\,(\pm0.02)$ \\
& & DC3 & $0.72\,(\pm0.08)$ & $0.17\,(\pm0.02)$ \\
& & HardNet & $0.66\,(\pm0.08)$ & $0.19\,(\pm0.03)$ \\
& & Soft-Radial & $0.77\,(\pm0.07)$ & $0.12\,(\pm0.02)$ \\
\midrule
\multirow{4}{*}{Sectors} & \multirow{4}{*}{$c=0.15$} & O-Proj & $0.80\,(\pm0.06)$ & $0.09\,(\pm0.01)$ \\
& & DC3 & $0.78\,(\pm0.06)$ & $0.07\,(\pm0.00)$ \\
& & HardNet & $0.84\,(\pm0.02)$ & $0.06\,(\pm0.01)$ \\
& & Soft-Radial & $0.87\,(\pm0.04)$ & $0.02\,(\pm0.00)$ \\
\bottomrule
\end{tabular}
\label{tab:portfolio-results-train-net}
\end{table}

\subsection{Extended Analysis: Ride-Sharing Dispatch}
\label{subsec:extended-analysis-ride-sharing-dispatch}

Our taxi dataset is derived from the publicly available records of the NYC Taxi and Limousine Commission (TLC) for the period of January to June 2024. To ensure a robust evaluation of temporal generalization, the data is split chronologically into 70\% training, 15\% validation, and 15\% testing sets. To capture the multi-scale periodicity of urban mobility, we represent temporal states using cyclical sine-cosine encodings for both daily and weekly cycles. Unlike categorical indicators that create artificial boundaries at midnight, this continuous representation allows the policy network $g_\theta$ to learn smooth, time-dependent transition boundaries. This enables the model to proactively align fleet distribution with anticipated demand peaks, maximizing the global served rate within the constraints of the time-varying supply $S_t$.

In Section~\ref{sec:numerical-experiments} (Table~\ref{tab:taxi-results}), we presented results for an MLP model subject to the scaled capped simplex with $\kappa=0.1$. Here, we introduce a more challenging experimental setting with $\kappa=0.02$. This tighter constraint creates a significantly more difficult allocation problem, requiring the distribution of the fleet across all $N=150$ zones with strictly limited per-zone capacity. For completeness, we also provide results for projection onto the standard simplex. While this baseline serves as a reference point consistent with Table~\ref{tab:taxi-results}, we observe a clear deterioration in performance for the constrained models under this stricter regime, characterized by a lower volume of served customers (\textit{cf.} Table~\ref{tab:taxi-results-kappa-lower}). Consistent with the less constrained objective, most methods perform equivalently, with the notable exception of the orthogonal projection. These findings suggest that across the set of projection methods, performance is heavily influenced by data noise and model complexity.

\begin{table}[H]
\caption{Ride sharing dispatch results aggregated over five random seeds (mean $\pm$ std).}
\centering
\small
\begin{tabular}{clc}
\toprule
Feasible Set & Method & Served Rate\\
\midrule
$\Delta$ & Softmax & $0.83\,(\pm0.00)$ \\
\midrule
\multirow{4}{*}{$\kappa=0.02$} & O-Proj & $0.64\,(\pm0.00)$ \\
& DC3 & $0.69\,(\pm0.00)$ \\
& HardNet & $0.69\,(\pm0.00)$ \\
& Soft-Radial & $0.69\,(\pm0.00)$ \\
\bottomrule
\end{tabular}
\label{tab:taxi-results-kappa-lower}
\end{table}

\end{document}